\newtheorem{lemma}{Lemma}
\newtheorem{theorem}{Theorem}
\newtheorem{corollary}{Corollary}
\begin{document}

\title{Trustformer: A \textbf{Trust}ed Federated Trans\textbf{former}}

% Submissions should be anonymized. See the CFP for details on how to anonymize your paper, including any references to your own work.
%\author{\em Anonymous Authors}

% The author information is skipped here, but can be used to include author information in the publication.

\author{\IEEEauthorblockN{1\textsuperscript{st} Ali Abbasi Tadi}
\IEEEauthorblockA{\textit{University of Windsor} \\
Windsor, Ontario, Canada \\
Abbasit@uwindsor.ca}
\and
\IEEEauthorblockN{2\textsuperscript{nd} Dima Alhadidi}
\IEEEauthorblockA{\textit{University of Windsor} \\
Windsor, Ontario, Canada \\
Dima.Alhadidi@uwindsor.ca}
\and
\IEEEauthorblockN{3\textsuperscript{rd} Luis Rueda}
\IEEEauthorblockA{\textit{University of Windsor} \\
Windsor, Ontario, Canada \\
Lrueda@uwindsor.ca}
%% IEEE format can accommodate up to six authors this way
}

\maketitle

\begin{abstract}
Transformers are deep-learning architectures primarily designed to process sequential data, achieving state-of-the-art performance across various tasks in sequence analysis. Models like BERT and GPT-3, derived from the Transformer architecture, have become foundational in solving broad challenges in Natural Language Processing (NLP) and have led to the development of large language models (LLMs). Despite the significant advancements made by LLMs, a critical issue remains: protecting the privacy of data owners whose information is used in LLM training. Privacy-preserving machine learning techniques, such as Federated Learning (FL), offer potential solutions for applying Transformers. However, several challenges limit the practical application of FL for Transformer training: (I) Recent research has shown that FL models can still leak sensitive information, primarily due to the FedAvg or FedSGD aggregation methods, and (II) the substantial size of Transformer models introduces a heavy communication burden when exchanging model weights between clients and a central server.
This paper proposes a novel FL method that reduces communication overhead while maintaining comparable utility to state-of-the-art methods. Our approach keeps model weights local by simulating a global model on the client side, thereby avoiding the need to share local weights with the central server. Instead, we apply $k$-means clustering on each layer of the Transformer model, identify centroids for each layer locally, and share only these centroids with the server in place of full model weights or gradients. To further enhance model security, we use Intel SGX to securely transmit centroids between clients and the server. We evaluated our scheme on a translation task and observed that it achieved a utility level on par with comparable baselines. The proposed method significantly reduces the data transferred between entities, resulting in a more efficient and privacy-preserving FL approach for Transformers.

\end{abstract} 

% Depending on how vigilant their paper processor is, IEEE may ask for these in your final paper, but we've heard about these amazing inventions called search engines that are able to index every word in your paper, so no need to include them in your submission unless you really want to.

%\begin{IEEEkeywords}
%component, formatting, style, styling, insert
%\end{IEEEkeywords}

\section{Introduction}

Federated Learning (FL),  developed by Google in 2017~\cite{mcmahan2017communication}, is a distributed Machine Learning (ML) approach that enables training models across multiple decentralized devices holding local data samples without exchanging them. This paradigm is especially beneficial for tasks that involve sensitive data, as it mitigates privacy risks by keeping the data localized. However, this scheme does not completely safeguard data from privacy attacks, since recent works have identified FL models vulnerable to some privacy attacks~\cite{gong2023gradient,dayal2023comparative}. % such as data reconstruction and model inversion attacks. %For instance, data reconstruction and model inversion attacks happen when the adversary has access to the model parameters being transmitted between clients and the server~\cite{gong2023gradient}. In addition, FL has unique challenges, such as problems with handling non-independent and identically distributed (Non-IID) data, communication overhead for large models, and the need for efficient model aggregation without accuracy loss~\cite{kairouz2021advances}. 
In general, two types of aggregation are widely used in FL: Federated Averaging (FedAvg) and Federated Stochastic Gradient Descent (FedSGD). In FedAvg, clients send their entire model (weights) to the central server, where the global model is created by averaging the clients' weights. In FedSGD, only the gradients are sent to the central server to construct the global model. Both aggregations are vulnerable to privacy attacks~\cite{gong2023gradient}, implying that sensitive information is retrievable from gradients or weights exchanged between the clients and the central server.

Transformers~\cite{vaswani2017attention,khan2022transformers} have revolutionized Natural Language Processing (NLP) tasks. They have been commonly used for tasks such as machine translation~\cite{raffel2020exploring}, text summarization~\cite{zhang2020pegasus}, and question answering~\cite{khashabi2020unifiedqa}. Transformers were initially applied to NLP tasks that require exploiting complex dependencies among elements in the input sequences (e.g., dependency parsing for determination of grammatical structures). Subsequently, they have been applied to Visual Question Answering (VQA)~\cite{zhang2021vinvl}, image classification~\cite{dosovitskiy2021image}, image generation~\cite{esser2021taming}, and protein fold prediction~\cite{jumper2021highly}. %More information about Transformer and its variants can be found in ~\cite{khan2022transformers}. %The Transformer that is trained on a large and general dataset (e.g., general text corpus) is called a pre-trained model (e.g., BERT, GPT-3) and is known to deal with various tasks. When applying a pre-trained model to a specific task, the model is fine-tuned by transfer learning using a small amount of training data specifically prepared for that task. Combining pre-training and transfer learning can achieve highly accurate recognition performance even with small amounts of training data in a reasonable computation time.
Integrating Transformers with FL promises to harness both technologies' strengths, potentially offering robust NLP models while preserving data privacy. However, FL cannot fully protect Transformers from privacy breaches as the federated mechanism has privacy vulnerabilities. A  recent research~\cite{fowl2022decepticons} has demonstrated that Transformers that apply FL for training language models are vulnerable to privacy breaches such as reconstruction attacks, which makes the central server able to reconstruct the clients' data from the weights being transferred from clients.

Several approaches in the literature have proposed mechanisms to protect the privacy of data owners during deep learning model training~\cite{abbasi2023comparative}. Techniques such as differential privacy~\cite{bu2022scalable}, multi-party computation~\cite{knott2021crypten}, and homomorphic encryption~\cite{lee2022privacy} have already been applied to standard neural network models, yielding reasonable results in simpler tasks and smaller models, such as image classification using convolutional neural networks. However, when it comes to complex models like Transformers, using these basic approaches cannot provide good results in large language models because of the complexity and the huge size of those models~\cite{chen2022fedtune}. %%With all the deficiencies of FL, it is still the best way for collaborative learning, providing guaranteed data privacy and reducing communication overhead. There is a research gap in this area, and the works that propose 
Applying FL to Transformers is often impractical due to utility loss and communication overhead. Recent studies have either caused a significant drop in utility when making FL Transformers resilient to privacy attacks~\cite{chen2022fedtune} or incurred substantial communication overhead when transferring large models between clients and the server~\cite{shi2023make}.
%One of the robust approaches for privacy-preserving machine learning is FL. In FL, multiple data owners train their local models and then share them with a remote server. The remote server is responsible for the aggregation of the local models, generates a global model, and sends it to the clients to do the second round of training. After some rounds the resulting model in all clients will be the global model as if the centralized model has been trained on all the data. 

Researchers have proposed various improvements to lower communication overhead and increase the accuracy of global models in FL~\cite{cheng2022differentially}. In this paper, we propose a new FL approach for neural networks that provides no privacy leakage. Additionally, our approach significantly reduces the model size transmitted between clients and the central server, while maintaining accuracy comparable to state-of-the-art methods. %We conducted comprehensive experiments to highlight the strengths and deficiencies of our approach for the research community.
We propose a secure FL framework (Trustformer) that can be applied to secure Transformer training from scratch. We applied our proposed method to a complex Transformer model for a translation task and observed high-quality translations with reduced communication overhead compared to state-of-the-art methods. To the best of our knowledge, Trustformer is the first approach that guarantees the privacy of the data owners and reduces network overhead while not dropping the accuracy of complex models like Transformers.  Trustformer is useful when training a federated Transformer from scratch and not fine-tuning a pre-trained model. The contribution of our work is summarized as follows.

\begin{itemize}
    \item  We propose a new federated learning approach, Trustformer, designed for large deep learning models. Instead of averaging the global model weights at the central server, we cluster the weights using  $k$-means and perform centroid averaging at the central server. The central server will not have access to the final global model at the end. Not having the final model on the server means there is less centralized information that could be exposed in the event of a security breach. 
    
    \item We show that since we transmit the centroids of weights, the new federated learning approach can significantly reduce the volume of data being transmitted between entities.
    \item We apply our proposed model for training a federated Transformer model from scratch. We used the translation task for this purpose. In addition, we provide comprehensive results confirming that our scheme works well in large Transformer models in terms of translation quality and model compression.
    \item We apply Intel SGX protection to protect the data being transmitted between various entities to enhance security.
    \item  We compare our approach with state-of-the-art methods (FedAvg~\cite{mcmahan2017communication}, DP-FedAvg~\cite{mcmahan2018learning}, DP-FedSAM~\cite{chen2022fedtune}, DP-BLUR-LUS~\cite{shi2023make}) for federated learning in terms of model size, output quality (BLEU score~\cite{post2018call}, METEOR~\cite{denkowski2014meteor}, and BERT F1~\cite{zhang2019bertscore}), demonstrating that our method performs well across all four metrics. 
    
\end{itemize}

The remainder of the paper is structured as follows. Section~\ref{sec:background} offers the essential background information for a thorough understanding of the paper. Section~\ref{sec:relatedwork}
delves into the related work. The system architecture followed by the threat model and assumptions are discussed in detail in Section~\ref{sec:systemoverview}. The proposed method is detailed in Section~\ref{sec:overview}. Mathematical analysis is provided in Section~\ref{sec:analysis}.
Section~\ref{sec:experiments} presents the experimental results, whereas Section~\ref{sec:discussion} provides a thorough security analysis. Finally, Section~\ref{sec:conclusion} concludes the paper, offering insights into future directions for research.

\section{Background}
\label{sec:background}
In this section, we present some preliminaries necessary for understanding the proposed framework.

\subsection{Transformer Architecture}
Transformers are a revolutionary type of neural network architecture designed to handle sequential data, especially when dealing with Natural Language Processing (NLP)~\cite{vaswani2017attention}. Transformers have set new benchmarks in various NLP tasks due to their ability to model long-range dependencies, parallelize training processes effectively, and require less time to train. The key components of the Transformer Architecture are: I) Encoder-Decoder Structure: A very simple Transformer model should include an encoder and a decoder. The encoder processes the input sequence to produce a context-rich representation whereas the decoder generates the output sequence from this representation, II) Self-Attention Mechanism: This is the core innovation of the Transformer. Self-attention is an attention mechanism relating different positions of a single sequence to compute a representation of the sequence.  %It allows each position in the input sequence to attend to all other positions, capturing relationships irrespective of their distance. 
This mechanism is implemented using multi-head attention, which enables the model to focus on different parts of the input sequence simultaneously, III) Feed-Forward Neural Networks: Both the encoder and decoder layers include position-wise fully connected feed-forward networks that apply non-linear transformations to the input, and IV) Positional Encoding: Since Transformers do not have a built-in notion of sequence order, positional encodings are added to the input embeddings to provide information about the positions of tokens in the sequence. Every architecture of deep neural networks, including Transformers, can be represented as a series of two-dimensional matrices. Transformers, like all deep neural network architectures, rely heavily on matrix operations for their computations. This matrix-based approach enables efficient implementation and optimization on modern hardware, facilitating the handling of complex and large-scale data across various applications.

\subsection{$k$-Means Clustering}
$k$-means clustering is a widely used unsupervised machine learning algorithm for partitioning a dataset into $k$ distinct, non-overlapping clusters~\cite{miao2023k}. Its goal is to group similar data points together while maximizing the distance among clusters~\cite{tadi2022nicasn}. Transformers and other neural network architectures can be represented using multiple matrices, such as weight matrices, embedding matrices, and attention mechanism matrices. Applying $k$-means clustering to these matrices is the main idea of this paper, where we can reveal patterns and insights into the model's behaviour and structure. Clustering weights can be used for model pruning by identifying and removing redundant or less important weights, leading to a more compact and efficient model. $k$-means clustering offers a powerful tool for analyzing and understanding the complex structures of neural network models, including Transformers. By applying clustering techniques to the matrices representing model weights, embeddings, and attention mechanisms, valuable insights into the model's inner workings can be gained. This approach can optimize the model's structure, enhance its interpretability, and lead to more efficient, compact, and interpretable models, ultimately improving their performance and applicability in various tasks.

\subsection{Intel Software Guard Extension}
Intel Software Guard eXtensions (SGX) is a set of security-related instruction code that are built into modern Intel processors. SGX is designed to provide a secure environment for sensitive computations by isolating specific code and data in memory regions called enclaves~\cite{tadi2024pppct}. These enclaves are protected from external software attacks, even if the operating system or BIOS is compromised. Enclaves are secure memory regions where code and data can be executed and stored. Only code running inside the enclave can access the data within it, providing strong isolation from the rest of the system. The key concepts related to SGX are: I) Memory Encryption: Data within enclaves are encrypted in memory, preventing unauthorized access or tampering by external processes or hardware attacks, II) Attestation: SGX provides mechanisms for remote attestation, allowing software running on a remote server to verify that it is running inside a genuine SGX enclave and has not been tampered with, and  III) Sealing: Data can be sealed to an enclave, allowing it to be securely stored outside the enclave and later retrieved and decrypted only by the same enclave, ensuring data integrity and confidentiality.

\section{Related Work}
\label{sec:relatedwork}
Table~\ref{tab:relatedwork} summarized and compared the related work from different perspectives. The FL quality for the translation task is measured by three score metrics such as BLEU, METEOR, and BERT F1, explained in section~\ref{sec:experiments}. In the following, we will detail those approaches based on the privacy perspective.

\begin{table*}[h]
    \centering
    \caption{Summary of related work}
    \scalebox{0.8}{
    \begin{tabular}{lcccll}
        \toprule
        \textbf{Aggregation Method} & \textbf{Privacy} & \textbf{FL Quality} & \textbf{Network Overhead} & \textbf{Supporting models} & \textbf{Privacy Mechanism} \\
        \midrule
        \textbf{FedAvg} \cite{mcmahan2017communication} & No & High & High & Simple/Complex & N/A \\
        \textbf{FedSGD} \cite{mcmahan2017communication} & No & High & Low & Simple & N/A \\
        \textbf{FedMA} \cite{wang2020federated}  & No & High & Low & Simple/Complex & N/A \\
        \textbf{DP-FedAvg} \cite{mcmahan2018learning} & Yes & Low & High & Simple/Complex & Gradient clipping + DP \\
        \textbf{FedDPA}~\cite{NEURIPS2023_e4724af0} & Yes & High & High & Simple & Fisher Matrix + DP \\
        \textbf{DP-FedSAM}~\cite{shi2023make} & Yes & High & High & Simple & Gradient clipping + DP\\
        \textbf{BLUR+LUS}~\cite{cheng2022differentially} & Yes & Medium & Medium & Simple/Complex & Gradient clipping + DP \\
        \textbf{Trustformer ($\beta=0.9$)} (Ours) & Yes & High & Medium & Simple/Complex & Generalization + TEE \\
        \bottomrule
        \label{tab:relatedwork}

    \end{tabular}}
\end{table*}
\subsection{Non-private Methods}
Here, we look at related work to see if they are aggregating the updates privately or not. Federated Averaging (FedAvg)~\cite{mcmahan2017communication} is a fundamental aggregation method in FL where model parameters from different clients are averaged to create a global model. This approach helps efficiently aggregate updates from multiple federated clients. Although FedAvg is an old method, it is still applicable to Transformers. However, there are significant concerns with it: I) The privacy of data owners (clients) is at risk if attackers gain access to the weights transmitted from the client to the central server, and II) Since this approach averages whole model parameters, it can apply to Transformer models but provides huge communication overhead because of the big size of the Transformer models that may contain millions of parameters. Federated Stochastic Gradient Descent (FedSGD)~\cite{mcmahan2017communication}, and Federated Matching Averaging (FedMA)~\cite{wang2020federated} improve FedAvg in terms of network overhead and supporting models, respectively. %We included the basic non-private methods in our work to evaluate how our privacy preserving method will cost us in terms of model utility. 
Other methods proposed in the literature that do not consider privacy include FedDyn~\cite{acar2021federated}, which dynamically updates regularizers on each device to align local and global model objectives, improving communication efficiency and robustness across diverse data distributions and device conditions. FedDyn can address the issue of different model updates across layers in deep architectures like Transformers, ensuring consistent updates across all layers. Attention mechanism-based aggregation~\cite{ji2021learning} leverages the attention mechanism within Transformers to assign different weights to updates that are coming from other clients based on their effect on the global model. This approach can enhance the aggregation process by prioritizing more significant updates. Adaptive Federated Optimizer (FedOpt)~\cite{reddi2021adaptive} uses adaptive optimization techniques like Adam or SGD with momentum for aggregating model updates. This approach can help achieve better convergence and performance for large models like Transformers. Many approaches~\cite{mcmahan2017communication, wang2020federated, acar2021federated, ji2021learning, reddi2021adaptive} focus on aggregation improvement of FL when applying complex models but they  do not consider FL privacy threats.

\subsection{Private Methods}
The remainder of the related work focuses on privacy-preserving methods proposed in the literature. The private version of FedAvg is Differential Privacy FedAvg (DP-FedAvg)~\cite{mcmahan2018learning}. FL with secure aggregation and Differential Privacy (DP)~\cite{phong2018privacy, truex2019hybrid, li2020convergence} combines secure aggregation with differential privacy to protect individual updates using cryptographic techniques and by adding noise to preserve privacy. Each client's data is perturbed before being encrypted and sent to the server for secure aggregation. This approach, however, causes a high accuracy drop due to the use of DP. DP-SGD~\cite{agarwal2018cpsgd} is a private version of FedSGD to protect individual data points while training simple machine learning models. However, since the optimization method in Transformers is different from SGD (typically Adam in LLMs), this approach does not apply to federated Transformers as it would lead to gradient explosion. Privacy-preserving FL with Trustworthy Aggregation~\cite{kang2020incentive} focuses on developing trustworthy aggregation methods that ensure data privacy and integrity, often using blockchain technology to enhance the trustworthiness of the aggregation process. Incorporating blockchain technology into the aggregation process can introduce significant computational and communication overhead.  Model distillation for FL~\cite{lin2020deep} involves each client training a smaller model (student) on its data, which are then aggregated into a global model (teacher). This method can reduce communication overhead and improve training efficiency. However, this approach is limited to fine-tuning Transformers and cannot be used for training a Transformer from scratch. FL with Transformer-based aggregation~\cite{ahuja2020federated} employs Transformers directly within the aggregation process, utilizing the model's capability to handle sequential data and dependencies to improve the aggregation of updates from various clients. Decentralized Aggregation~\cite{lalitha2019fully} in FL allows model updates to be shared and aggregated among clients without a central server. This approach can improve privacy and reduce the risk of a single point of failure. 

Shi \emph{et  al.}~\cite{shi2023make} proposed a novel approach called DP-FedSAM to address performance degradation in Differentially Private FL (DPFL). Traditional DPFL methods suffer from a sharp loss landscape and poor weight perturbation robustness, leading to diminished model performance. DP-FedSAM incorporates Sharpness-Aware Minimization (SAM) to generate locally flat models that mitigate these issues. By stabilizing local updates and reducing the negative impact of DP noise, DP-FedSAM improves both the generalization ability and robustness of the model.  Shi \emph{et  al.}~\cite{shi2023make} offers a theoretical analysis using Rényi Differential Privacy (DP) for privacy guarantees and empirically shows that DP-FedSAM achieves state-of-the-art performance on datasets like EMNIST and CIFAR-10 compared to existing DPFL baselines. Cheng \emph{et al.}~\cite{cheng2022differentially} introduced two key techniques: Bounded Local Update Regularization (BLUR) and Local Update Sparsification (LUS) to improve the utility of models in user-level DPFL. BLUR limits the norm of local updates before the clipping operation, reducing the negative impact of noise addition on the final model. LUS further enhances this by sparsifying local updates, keeping only the most relevant parameters and zeroing out the rest, effectively reducing the magnitude of updates while maintaining model accuracy. Cheng \emph{et al.}~\cite{cheng2022differentially} provided a theoretical analysis of convergence and privacy guarantees, and demonstrated through experiments that their method significantly improves the privacy-utility tradeoff in federated learning with differential privacy guarantees compared to existing approaches.

\section{System Overview}\label{sec:systemoverview}
In this section, we detail Trustformer's architecture, followed by the threat model and assumptions.

%\section{Proposed method}
%\label{sec:proposed}
%Here, we propose our Trustformer method, which is capable of running FL for both simple and complex models.
\begin{figure}[htbp]
  \centering
  \includegraphics[width=.4\textwidth]{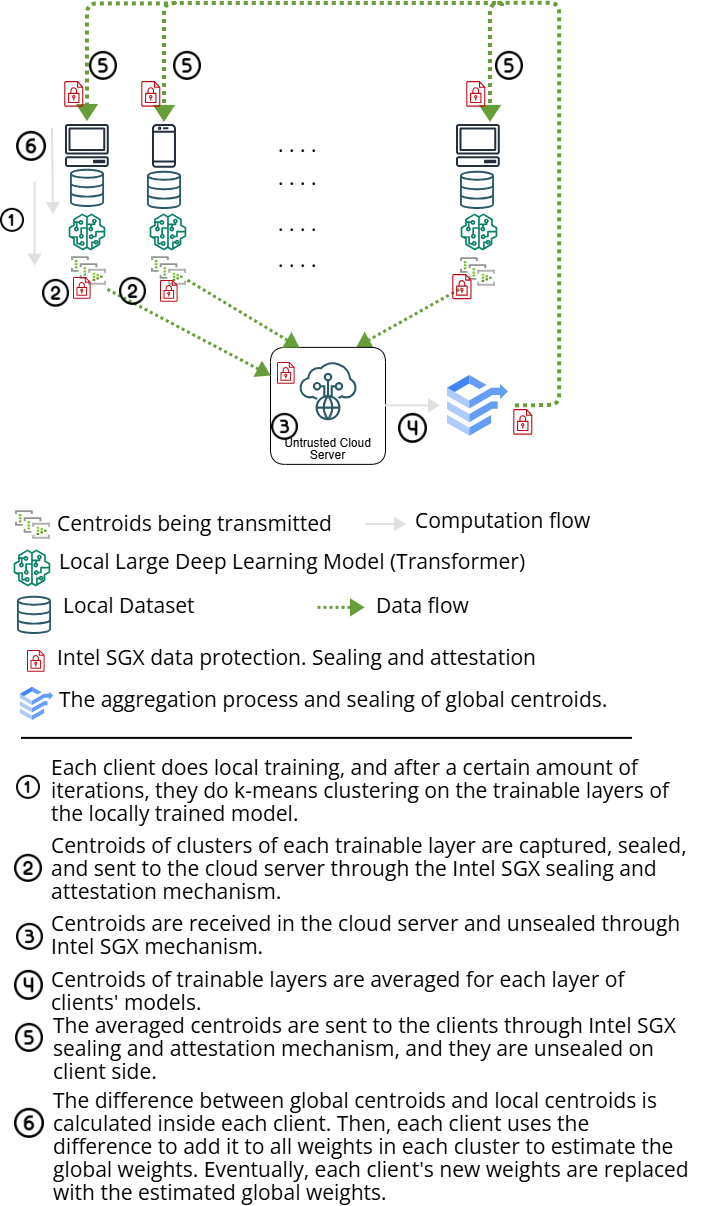} % Replace 'example-image' with the filename of your image
  \caption{Architecture}
  \label{fig:architecture}
\end{figure}
\subsection{Architecture}
Following Figure~\ref{fig:architecture}, several clients jointly train a Transformer from scratch based on their own dataset. We also have a central server in the  Trustformer's architecture responsible for the aggregation of the clients' local parameters. Each client has an Intel SGX processor and can communicate securely with the central server. Each client trains its local model and sends some information about its local model to the server for aggregation. Once aggregation is done on the server, the results are sent back to the clients. Clients use those results to estimate the global model. This process takes place over one (or multiple) iteration until all clients reach the required performance. To further enhance the privacy and security aspects of federated learning of Transformers, we incorporate Intel SGX, a hardware-based Trusted Execution Environment (TEE), into our federated learning framework. Intel SGX provides a secure enclave for computations, ensuring that the data being transmitted over the network are protected even on potentially compromised devices. This addition is depicted in Figure~\ref{eq:aggregation}, where each SGX client operates within a secure enclave, enabling encrypted communication of local and global parameters.

\subsection{Threat Model}
\label{sec:threat}
We consider two threat models. First, we assume that the clients and the central server are honest-but-curious. They act honestly and correctly following the designated framework specification. However, any of these entities may intentionally and curiously try to break privacy by inspecting the data received from other entities. The second threat model is that the clients are honest-but-curious but the server is compromised and not trusted. The central server will use the client's updates to reconstruct the client's data. We show that Trustformer safeguards the model from all types of attacks.

\subsection{Assumptions}
Each client $i$ has a subset of the dataset $D$ that contains sequence data records $D=\{(x_1, y_1),(x_2, y_2),..., (x_{|D|},y_{|D|})\}$ where $x_m$ is the data sequence $m$, and $y_m$ is the target sequence of the source $x_m$. In our translation task, we consider $x_m$ as the original sequence (like a sentence in the Russian language) and $y_m$ as the translation of the $ x_m$ into English. Additionally, each client is responsible for training its local model using ${\frac{|D|}{n}}$ data records, where $n$ is the number of clients. Then, all clients should use the proposed aggregation method to converge to the global model. We assume each Transformer model in clients consists of a stack of $l$ trainable layers $ M^{local} =\{m_1,m_2, ..., m_l\}$ where $m_j$ for $j=1,2,\dots,l$ is a trainable layer in the Transformer model $M^{local}$.  Transformers usually include non-trainable layers as well, like dropout or normalization layers, that are not included in $M^{local}$ since their values are not trainable. Each $m_j$ produces a matrix $w_j$, which shows its optimal weights (or biases) after training. In addition, we assume each weight matrix ($w_j^{r\times f}$) has the dimension of $r$ records and $f$ features. For example, the positional encoding in our experiments, which uses a Cross-lingual Language Model (XLM)~\cite{lample2019cross}  tokenizer, has $r=250000$ and  $f=256$. For simplicity, we consider $r$ equals the number of $m_{j-1}$'s layer neurons, and $f$ equals the number of neurons in the $m_j$ layer. Thus, each $w_j$ is a matrix, with dimensions  $r\times f$. The content of the matrix is the weights (or biases) of neurons. We propose to run a $k$-means clustering on each $w_j$ of each client's local model $M^{local}$ after local training. This clustering aims to obtain the centroids points $c^i_j = \{\hat{c}_1, \hat{c}_2, ..., \hat{c}_{No_c}\}$ where $c^i_j$ represents the set of centroids in layer $j$ of the model of client $i$, $No_c$ is the number of clusters, and $\hat{c}_k$, for $k=1,2,\dots,No_c$, is a vector of  size  $f$. Because the layer dimensions vary, our proposed algorithm adaptively adjusts $No_c$  rather than keeping it constant across all layers. In addition, we assume that all clients agree on the clustering scheme, including the clustering algorithm, clustering ratio ($\beta$) for each layer, and the same seed for generating similar random values. Since we are adaptively adjusting the number of clusters, the value of the clustering ratio ($\beta$) is common between clients.

\section{Trustformer}
\label{sec:overview}
In this section, we provide the details of the proposed method through various subsections.
\subsection{Overview}

As illustrated in Figure~\ref{fig:architecture}, in the first step, each client $i$ starts by training its local model ($M_{\text{local}}^i$). This local model is a Transformer model, which includes attention mechanisms to understand the text sequences better. Therefore, there are a significant number of trainable local parameters. Once the $i$-th client trains its local model (out of $n$ clients), it will result in the locally optimal weights for each model layer. Assuming all clients perform their local training in parallel, we will have a set of all locally trained models from all clients. Each client $i$ trains its local model $M_{\text{local}}^i$ to obtain the locally optimal weights: 

\begin{equation}
W_{local}^i = \text{Train}(M_{\text{local}}^i, D_i)
\end{equation}
\noindent where $\text{Train}(\cdot)$ represents the training process, including forward and backward passes and updates to the model parameters using the Adam optimizer~\cite{kingma2014adam}. $D_i$ is the dataset of client $i$ whose model should be trained on. Since the local model for client $i$ has $l$ trainable layers $M_{local}^i =\{m_1^i,m_2^i, ..., m_l^i\}$, the local optimal weights for each model is $W_{local}^i=\{w_1^i,w_2^i, ..., w_l^i\}$.
\begin{figure}[htbp]
  \centering
  \includegraphics[width=0.5\textwidth]{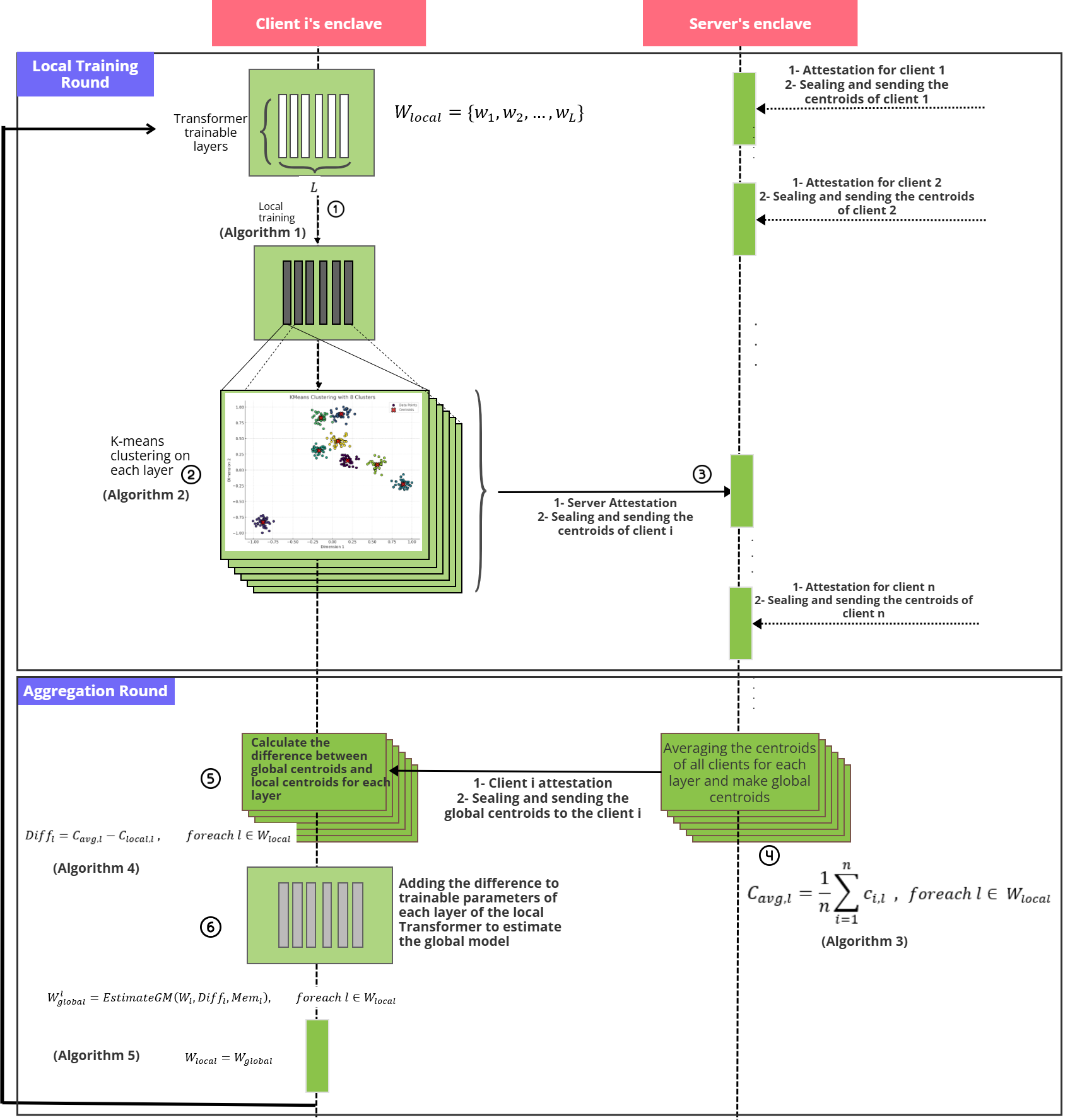} % Replace 'example-image' with the filename of your image
  \caption{Training and aggregation}
  \label{fig:aggregation}
\end{figure}
In the second step of Figure~\ref{fig:architecture}, we do $k$-means clustering on each layers's weights matrix of the Transformer. Then, we send the obtained centroids to the central server for aggregation. We also keep the labels of the clusters local to simulate the global model after the server sends back the aggregation results. In practice, the model weights $W_{local}^i$ is reduced to a set of centroids for all layers $C_{local}^i=\{c_1^i,c_2^i,...,c_l^i\}$, where the shape of $c_j^i \quad, \forall 1\leq j \leq l$ is $(No_c\times f)$. 
 
 Each centroid leads the coordinates of all data points in one cluster, so the number of clusters equals the number of centroids. Note that the number of clusters ($No_c$) is adaptively adjusted and way lower than the number of weights. Because of this reduction, we ensure the data transmitted between the client and server are way less than the whole model weights $W_{local}$. According to Theorem~\ref{theo:centroids}, clients eventually converge to the global model generated by FedAvg once the server averages the centroids instead of actual weights, provided that the number of clusters is sufficiently high. Other than mathematical proof on the Theorem~\ref{theo:centroids} shown in the Appendix, in Section~\ref{subsec:loss}, we experimentally prove Theorem~\ref{theo:centroids} by observing the loss function over different numbers of clusters (by changing clustering ratio $\beta$).

After completing local clustering, we leverage Intel SGX to send the $C_{local}^i$ to the central server. Although $C_{local}^i$, due to generalization, does not reveal much information about the weights in the model, for more robust protection, we attest the central server, seal $C_{local}^i$,  and then send it to the central server. Centroids are the averages of a bunch of weights.  If the number of centroids increases in our approach, each centroid represents fewer weights. For example, when the number of centroids matches the total number of weights, each centroid corresponds to a single weight, making our method behave like the baseline (FedAvg). As the number of centroids decreases, each centroid represents more weights, revealing less information about individual data. In step 3 in Figure~\ref{fig:architecture}, $C_{local}^i$ for all clients are received, unsealed, and aggregated through vector averaging as illustrated in Figure~\ref{fig:aggregation}. The aggregation happens in step 4 of Figure~\ref{fig:architecture}, by using Equation (\ref{eq:aggregation}). The central server layer-wise averages all centroids coming from clients to find global centroids ($C_{global} = \{c_{avg,1}, c_{avg,2}, ..., c_{avg,l}\}$) and send them back to clients to estimate the global model weights ($W_{global}$). %Equation~(\ref{eq:aggregation} shows how we do layer-wise averaging of centroids. 

\begin{equation}
\label{eq:aggregation}
c_{avg,j} = \frac{1}{n} \sum_{i=1}^{n}c_{i,j} , \quad \forall 1\leq j\leq l
\end{equation}

%\begin{theorem}[Averaging Centroids of the weights and FedAvg]
%\label{theo:centroids}

%Let $W_i$, $i = 1, 2, \ldots, n$, be $n$ matrices, each with $r$ rows and $f$ features. Performing $k$-means clustering on each matrix $W_i$ with $No_{c}$ clusters results in centroids $c_i = \{\hat{c}_{i1}, \hat{c}_{i2}, \ldots, \hat{c}_{iNo_c}\}$. Compute the average centroids across the matrices:
%\[
%c_{avg} = \{\frac{1}{n} \sum_{i=1}^{n} \hat{c}_{ij} | %\quad \forall 1 \leq j\leq No_c\}
%\]

%Then, shift all $r$ data points in each cluster of each matrix according to the difference vectors ($c_{avg} - c_{i}$) between the averaged centroids and the original centroids. As $No_{c} \rightarrow r$, where $r$ is the number of records in  $W^{r\times f}$, this process approximates the results obtained by the Federated Averaging (FedAvg) algorithm.
%\end{theorem}
%\begin{proof}
%\noindent The proof is in the Appendix.
%\end{proof}

The details of the aggregation mechanism are depicted in Figure~\ref{fig:aggregation}. 
%As per this figure, once all local centroids are delivered to the server , the aggregation process happens for each layer, and the results are sent back to the clients. 
 The steps of computations in Figure~\ref{eq:aggregation} are shown in circles. In the first step, the Transformer in client $i$ is locally trained, and the weights are set up during the first round of local training. In the second step, we apply $k$-means clustering on each of the weights of each layer and find the centroids of the weights of each layer of the local transformer ($C_{local}$). In Figure~\ref{fig:aggregation}, we assumed the layers have just two neurons (equal to two features), making an intuitive plotting of centroids for simplicity. However, in actual implementation, the layers can have an arbitrary number of neurons. In the third step, to protect privacy, the centroids are sealed and sent to the central server after attestation. In the fourth step, the central server unseals $C_{local}$, averages the local centroids layer-wise of all clients, and generates the global centroids ($C_{global}$). Then, the global centroids are sealed and sent back to all clients. Once client $i$ receives $C_{global}$, in the fifth step, the difference vector ($Diff$) between  $C_{global}$ and $C_{local}$ are computed by the client, and then, in step six, all the weights in $W_{local}$ are shifted according to $Diff$. Using this way, clients approximate the $W_{global}$ without having to share the individual weights ($W_{local}$) with the central server. A simple example of the approximation process is shown in Figure~\ref{fig:simulateGM} and explained in subsection~\ref{sec:proposedalg}. After the $W_{global}$ is approximated, the client replaces its local weights with the new approximated weights and starts the next round of training with the new weights.

\subsection{Proposed Algorithm}
\label{sec:proposedalg}
Algorithm~\ref{alg:trustformer} illustrates the overall process of our proposed approach Trustformer. We defined the aggregation frequency $Agg_{fr}$, which defines how often the aggregation should happen. $Agg_{fr}$ is different than the number of iterations ($R$). $R$ defines how many training iterations we should have in total. In some cases, we might need to do multiple iterations of local training without aggregation. We included $Agg_{fr}$ for the generalizability of the work. However, in our experiments, $Agg_{fr}$ is set to 1, meaning for each iteration, we do one aggregation with the server. For instance, if $Agg_{fr}=5$, in every 5 iterations, we do aggregation with the server. As in real-world FL scenarios, we do not need to apply aggregation on every single iteration~\cite{guha2019one}. Setting higher values for $Agg_{fr}$ helps reduce the training time in real-world use cases. Algorithm~\ref{alg:trustformer}  ultimately provides the global model for all clients without having clients share all their local parameters, in contrast to FedAvg. The inner loop of Algorithm~\ref{alg:trustformer} underscores the tasks that should be done on each client side. The $DoClustering$ algorithm is presented in Algorithm~\ref{alg:apply-clustering}, where we apply $k$-means clustering to each layer of $W_{local}$, resulting in the centroids ($C_{local}$) and the membership ($mem_{local}$) of the data points. The membership indicates which cluster each data point belongs to. For example, if $mem[2] = 3$, it means that data point 2 is a member of cluster 3. An example of the clustering result from Algorithm~\ref{alg:apply-clustering} is shown in Figure~\ref{fig:simulateGM}-a, where, for simplicity, we assume a single trainable layer with only two neurons, and the trainable weights are represented as f1 and f2. Additionally, there are 300 inputs connected to these two neurons (i.e. $r=300, f=2$), resulting in a weight matrix that is a matrix with 300 rows and two columns ($f_1$ and $f_2$). We perform clustering on the local model weights to separate the weights with similar behaviour in the model. Figure~\ref{fig:simulateGM}-a illustrates the clustering output on one client for a simple model with a single layer and 3 clusters ($No_c=3$). In the proposed algorithm Trustformer, the number of neurons is significantly higher than in the example shown in Figure~\ref{fig:simulateGM}. However,
the number of clusters ($No_c$) for Algorithm~\ref{alg:apply-clustering} is adaptively adjusted by Equation~(\ref{eq:no-c}) according to the number of tensors (rows) that they handle. In Equation~(\ref{eq:no-c}), we identified a clustering ratio ($\beta$) that can vary in the range $(0,1]$ and $r$, which is the number of all tensors in matrix $w_j$ for $j=1,2,\dots,l$. 

Once clients are done with clustering, they send the obtained centroids of the local model to the central server for aggregation. The aggregation happens in the server and provides the clients with the global centroids ($C_{global}$). The global centroids are simply the layer-wise average of all local centroids that are being sent to the server. Following Equation~(\ref{eq:aggregation}), the server does the averaging of the centroids. The aggregation algorithm is shown in Algorithm~\ref{alg:server-aggregation}, where the central server obtains the local centroids of all clients, performs the aggregation, and sends the global centroids to all clients. In Figure~\ref{fig:simulateGM}-b, we assumed that the server has already computed the global centroids (shown by triangles) and sent the global centroids to all clients to approximate the global model. 

In the next step (line 14 of Algorithm~\ref{alg:trustformer}), once the clients obtain the global centroids, they compute the difference vectors ($Diff$) between local and global centroids using Algorithm~\ref{alg:compute-difference}. The $Diff$ records the changes in centroids and subsequently determines how much and in what directions the local weight in each cluster in matrix $w_j$ should be shifted. For example,  Figure~\ref{fig:simulateGM}-b shows the difference vector in terms of $\Delta f_1$ (variation of $f_1$ feature) and $\Delta f_2$ (variation of $f_2$ feature). As it is clear in Figure~\ref{fig:simulateGM}, the number of elements in the $Diff$ set equals the number of layers in the model ($Diff=\{Diff_1\}$), and the number of points in each element of $Diff_1$ equals the number of clusters of each layer. Since in Figure~\ref{fig:simulateGM}-b, there is only 1 layer, ($Diff_1=\{(\Delta f_1, \Delta f_2)_{\hat{c}_1}, (\Delta f_1, \Delta f_2)_{\hat{c}_2}, (\Delta f_1, \Delta f_2)_{\hat{c}_3}\}$), where $\hat{c}_j$ shows the centroid of cluster $j$. $S^i_{l,j}$ represents the data points of layer $l$ of client $i$ in cluster $j$. If we consider the blue data points as cluster $S^i_{1,1}$ with center $\hat{c}_1$, yellow as cluster $S^i_{1,2}$ with center $\hat{c}_2$, and purple as cluster $S^i_{1,3}$ with center $\hat{c}_3$, $Diff_1=\{(-4.49, -3.01)_{\hat{c}_1}, (1.07, 2.49)_{\hat{c}_2}, (1.67, -3.85)_{\hat{c}_3}\}$.

In the final step, the clients need to approximate the global model by only having the global centroids and their local models. This happens in Algorithm~\ref{alg:estimate-gm}, and it provides the global model for Algorithm~\ref{alg:trustformer}. Algorithm~\ref{alg:estimate-gm} is responsible for shifting the data points to their correct location by looking at the centroid variations in $Diff$. A simple example is shown in Figure~\ref{fig:simulateGM}-c, where the data points in cluster $j$ ($S^i_{l,j}$) of the local model are shifted based on the deviation of the local cluster $j$'s centroid from the global centroids ($ (\Delta f_1, \Delta f_2)_{\hat{c}_j}$). Algorithm \ref{alg:estimate-gm} is the most important algorithm in our work since it helps clients approximate the global model weights with neither having access to the other clients' local model weights nor having server access to the clients' local model weights. Once clients estimate the global weights, they simply copy these weights to their local model so they can proceed with the next round of training. 

As we already mentioned, the server-side algorithm for generating the global centroids ($C_{global}$) is illustrated in Algorithm~\ref{alg:server-aggregation}. All centroids exchanged between clients and the server are encapsulated within Intel SGX enclaves to enhance security. This process is outlined in line 10, line 11, and line 13 of Algorithm~\ref{alg:trustformer}.  Clients perform attestation and sealing/unsealing when communicating with the central server. Line 5 and Line 14 of Algorithm~\ref{alg:server-aggregation} handle the sealing of centroids during transmission. Apart from Algorithm~\ref{alg:server-aggregation}, all other algorithms are executed on the client side, ensuring their privacy is already preserved.

\subsection{Cluster Size Adaptation}
While a fixed cluster size ($No_c$) is effective in our work for simple neural network models, we propose using an adaptive cluster size based on the dimensions of the weight matrix. To achieve this, we calculate $No_c$ using  Equation~(\ref{eq:no-c}). 
\begin{equation}
\label{eq:no-c}  
No_c = \lfloor{r}.\beta\rfloor
\end{equation}
\noindent where, $r$ is the number of rows in the weight matrix, and $\beta$ is the clustering ratio in our proposed method. The rationale for this adaptive selection is that different layers of the Transformer model have widely varying dimensions depending on their configuration.  This adaptive selection ensures that centroids of a fixed portion of the layers are considered for aggregation. For example, if we select $\beta = 0.2$, we will apply clustering to the positional encoding with $No_c = 0.2\times 250000=50000$. For feed-forward neural network layers, clustering will result in $No_c=\lfloor0.2\times 256\rfloor =51$. The $No_c$ ranges between $1$ and $r$.

\begin{figure*}[htbp]
  \centering
  \includegraphics[width=0.8\textwidth]{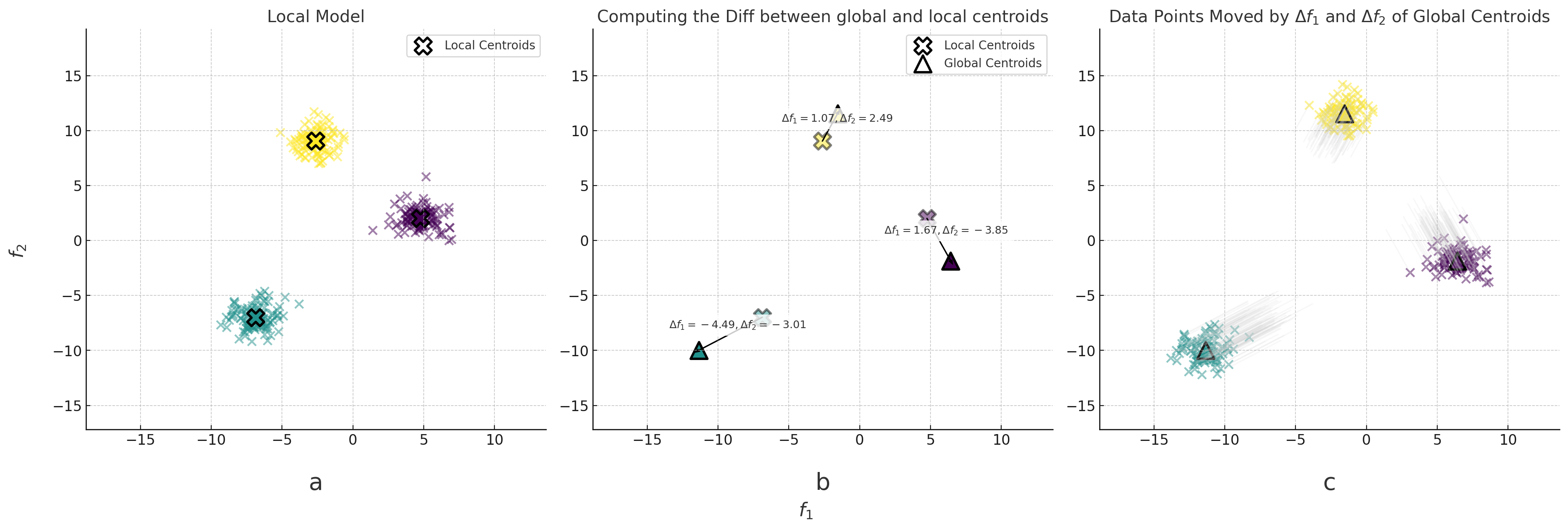} % Replace 'example-image' with the filename of your image
  \caption{Example of Simulating the global model by using the difference between global centroids and local centroids. For simplicity, We assumed there is only one trainable layer with two neurons and 300 inputs. The weights of the neurons are shown as $f_1$ and $f_2$. Since there are 300 inputs, 2 trainable weights and only one layer, we have a matrix of 300 rows and 2 columns as the model. a) Clustered local model with $No_c=3$. b) Distance between global centroids features and local centroids features are computed as set of differences ($Diff_1=\{(\Delta f_1, \Delta f_2)_{\hat{c}_1}, (\Delta f_1, \Delta f_2)_{\hat{c}_2}, (\Delta f_1, \Delta f_2)_{\hat{c}_3}\}$). c) Simulation of the global model by moving each data point according to the differences. $(\Delta f_1, \Delta f_2)_{\hat{c}_j}$ is added to each data point in cluster $S^i_{1,j}$.}
  \label{fig:simulateGM}
\end{figure*}

\begin{algorithm}
\scriptsize 
\caption{Trustformer}
\label{alg:trustformer}
\begin{algorithmic}[1]
\STATE \textbf{Input:} number of communication rounds $R$, number of clients $n$, aggregation Frequency $Agg_{fr}$, clustering ratio $\beta$, dataset $\mathcal{D}$  \\
\STATE \textbf{Output:} Trained global model on the whole dataset $\mathcal{D}$
\STATE Initialize local model $M_{local}^i$ for all clients in parallel where $1\leq i \leq n$
\STATE Load local dataset $\mathcal{D}_i$ for all clients in parallel
\FOR{each  $round = 1, 2, \ldots, R$}
    \FOR{each client $i = 1, 2, \ldots, n$ \textbf{in parallel}}
        \STATE $W_{local}^i = Train(M_{local}^i, D_i)$
        % \COMMENT{Train model on dataset $D_i$ to obtain optimized local model $M^{local}$}
        \IF{($round$ mod $Agg_{fr}$) = 0} %\COMMENT{Aggregation happens in the following}
            \STATE $C_{local}^i, Mem^i_{local}\gets DoClustering(W_{local}^i,\beta)$ \COMMENT{Using Algorithm~\ref{alg:apply-clustering}}
            \STATE Initiate remote attestation to connect to the server
            \STATE Seal and send local centroids updates ($C_{local}^{i}$) to the server
            \STATE $C_{global} \gets Aggregation({C^i_{local}})$ \COMMENT{Using Algorithm~\ref{alg:server-aggregation}}
            %\STATE Wait until the aggregation happens in the server and obtain sealed global centroids ($C_{global}$) from the server using Algorithm~\ref{alg:server-aggregation}
            \STATE unseal $C_{global}$
            \STATE $Diff^i \gets ComputeDifference(C_{global},C_{local}^i)$ \COMMENT{Using Algorithm~\ref{alg:compute-difference}}
            \STATE $W_{global}^i \gets EstimateGM(W_{local}^i, Diff^i, Mem^i_{local})$ \COMMENT{Using Algorithm~\ref{alg:estimate-gm}}
            \COMMENT{estimate the global model weights ($W_{global}$) by adding the $Diff^i$ to all the points in the same clusters using $Mem^i$ vector}
            \STATE Copy global weights $W_{global}$ to local model $M_{local}^i$
        \ENDIF
    \ENDFOR
\ENDFOR
\STATE \textbf{Return} model $M^{local}$
\end{algorithmic}
\end{algorithm}

\begin{algorithm}
\scriptsize 
\caption{DoClustering($W_{local}$,$\beta$)}
\label{alg:apply-clustering}
\begin{algorithmic}[1]
\STATE \textbf{Input:} Local Transformer model weights ($W_{local}$) , and clustering ratio $\beta$ 
\STATE \textbf{Output:} Set of centroids ($C_{local}$), and membership status of each record ($Mem_{local}$)
\STATE $C_{local} \gets \{\}$
\STATE $Mem_{local}\gets \{\}$
\FOR{$w_l^{r\times f}\in W_{local}$} %\COMMENT{each layer may have different values for $r$ and $l$}
    \STATE $No_c\gets\lfloor r.\beta \rfloor$ \COMMENT{each layer may have different values for $r$ and $f$}
    \STATE $C_l^{No_c \times f},mem_l^{r \times 1}\gets kmeans(w_l^{r\times f},No_c)$
    \STATE $C_{local}\gets C_{local} \cup C_l^{No_c \times f}$
    \STATE $Mem_{local}\gets Mem_{local}\cup mem_l^{r \times 1}$
\ENDFOR
\STATE \textbf{Return} $C_{local}$, $Mem_{local}$
\end{algorithmic}
\end{algorithm}

\begin{algorithm}
\scriptsize 
\caption{Server side Aggregation($C_{local}$)}
\label{alg:server-aggregation}
\begin{algorithmic}[1]
\STATE \textbf{Input:} Local centroids of $i$-th client ($C_{local}^{i}=\{c_1^i, c_2^i, ..., c_l^i\}$), where $l$ is the number of layers of weights. 
\STATE \textbf{Output:} Global centroids ($C_{global}$)
\STATE Wait until all centroids from $n$ clients are received
\STATE $C_{all}\gets \{C_{local}^{1}, C_{local}^{2}, ..., C_{local}^{n}\}$
\STATE Unseal $C_{all}$
\STATE $C_{global}\gets \{\}$
\FOR{$j \in \{1,2,.., l\}$} 
    \STATE $sum_j = 0$
    \FOR{$i \in \{1,2,.., n\}$}
        \STATE $sum_j \gets sum_j + c_j^{i}$ \COMMENT{$c_j^{i}$ is the local centroids of the $j$-th layer of the model weights in $i$-th client }
    \ENDFOR
    \STATE $C_{global} \gets C_{global} \cup (\frac{sum_j}{n})$
\ENDFOR
\STATE Seal $C_{global}$
\STATE \textbf{Return} Sealed $C_{global}$
\end{algorithmic}
\end{algorithm}

\begin{algorithm}
\scriptsize 
\caption{ComputeDifference($C_{global}$,$C_{local}$)}
\label{alg:compute-difference}
\begin{algorithmic}[1]
\STATE \textbf{Input:} Global centroids ($C_{global}$) , and local centroids ($C_{local}$) 
\STATE \textbf{Output:} Difference between global and local centroids ($Diff$)
\STATE $Diff\gets \{\}$
\FOR{$c_l^{No_c \times f} \in C_{local}$ and $c_g^{No_c \times f} \in C_{global}$} 
    % \STATE $No_c\gets\lfloor r.\beta \rfloor$
    \STATE $Diff_l^{No_c \times f}\gets (c_g^{No_c \times f} - c_l^{No_c \times f})$ \COMMENT{Element-wise subtraction}
    \STATE $Diff\gets Diff \cup Diff_l^{No_c \times f}$
\ENDFOR
\STATE \textbf{Return} $Diff$
\end{algorithmic}
\end{algorithm}

\begin{algorithm}
\scriptsize 
\caption{EstimateGM($W_{local}$,$Diff$,$Mem$)}
\label{alg:estimate-gm}
\begin{algorithmic}[1]
\STATE \textbf{Input:} Local model weights ($W_{local}$), Difference between global centroids and local centroids ($Diff$), membership status of each record in each trainable layer ($Mem$)  
\STATE \textbf{Output:} Approximated global model weights ($W_{global}$)
% \STATE $Diff\gets \{\}$
\STATE $W_{global}\gets \{\}$
\FOR{$w_l^{r \times f}\in W_{local}$, and $mem_l^{r \times 1}\in Mem$,  and $diff_l^{No_c \times f}\in Diff$}
    \STATE $temp_l \gets \{\}$
    \FOR{$i \in \{0,2,..,len(unique(mem_l))\} $}
        \STATE $temp\gets w_l^{r \times f}[mem_l^{r \times 1}[i]]+diff_l^{No_c \times f}[mem_l^{r \times 1}[i]]$
        \STATE $temp_l\gets temp_l\cup temp$
    \ENDFOR
    \STATE $W_{global}\gets W_{global} \cup temp_l$
    
\ENDFOR
\STATE \textbf{Return} $W_{global}$
\end{algorithmic}
\end{algorithm}

\section{Analysis}
\label{sec:analysis}
Theorem~\ref{theo:centroids} mathematically proves the convergence of the Trustformer with some error proportional to the selected number of clusters.

\begin{theorem}
 \label{theo:centroids}
 In a federated learning setting where each client clusters their local model parameters \( w_i \in \mathbb{R}^r \) into \( No_c \) clusters and updates their model by adjusting parameters based on the differences between global and local centroids, the clients' updated models \( w_i^{\text{global}} \) converge to the Federated Averaging (FedAvg) global model \( w_{\text{FedAvg}} = \frac{1}{n} \sum_{i=1}^{n} w_i \) as \( No_c \to r \).
\end{theorem}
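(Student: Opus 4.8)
The plan is to fix a single layer and track one row $p$ of its weight matrix through the entire Trustformer update, then compare the outcome against its FedAvg target $\frac{1}{n}\sum_{i=1}^{n} w_i$. Writing $k(p)$ for the cluster to which row $p$ is assigned on client $i$, the shift produced by Algorithm~\ref{alg:estimate-gm} is $w_i^{\text{global}}[p] = w_i[p] + (c_{avg,k(p)} - c_{i,k(p)})$, where $c_{i,k}$ is the local centroid of cluster $k$ and $c_{avg,k} = \frac{1}{n}\sum_{j=1}^{n} c_{j,k}$ is the averaged centroid returned by Algorithm~\ref{alg:server-aggregation} via Equation~(\ref{eq:aggregation}). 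The first move is to rewrite this as $w_i^{\text{global}}[p] = \underbrace{(w_i[p] - c_{i,k(p)})}_{\text{local quantization error}} + c_{avg,k(p)}$, isolating the residual between a weight and the centroid that represents it.

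The exact-recovery core is the endpoint $No_c = r$. Here $k$-means on $r$ distinct rows yields $r$ singleton clusters, so every centroid coincides with its unique member, $c_{i,k(p)} = w_i[p]$, and the quantization residual vanishes. Identifying cluster label $k$ with the common row index $p$ (legitimate once each cluster holds a single row), the averaged centroid becomes $c_{avg,k(p)} = \frac{1}{n}\sum_{j=1}^{n} w_j[p]$, and the update collapses to $w_i^{\text{global}}[p] = \frac{1}{n}\sum_{j=1}^{n} w_j[p] = w_{\text{FedAvg}}[p]$ exactly. This pins down the limit point and shows it is attained rather than merely approached.

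For $No_c < r$ I would bound the deviation from FedAvg by the within-cluster dispersion. Decomposing $w_i^{\text{global}}[p] - w_{\text{FedAvg}}[p] = (w_i[p] - c_{i,k(p)}) + (c_{avg,k(p)} - w_{\text{FedAvg}}[p])$, the first summand is the local $k$-means quantization error, and the second, after expanding $c_{avg,k} = \frac{1}{n}\sum_{j} c_{j,k}$, measures how far the FedAvg weight at $p$ lies from the cluster-averaged FedAvg weights. Both are dominated by the cluster diameters: each is at most $\max_{k}\operatorname{diam}(S^i_{l,k})$. Since the optimal $k$-means distortion is non-increasing in the number of clusters and hits $0$ at $No_c = r$, these diameters shrink to $0$ as $No_c \to r$, which yields $w_i^{\text{global}} \to w_{\text{FedAvg}}$ with a residual error proportional to the clustering distortion, exactly as claimed.

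The step I expect to be the main obstacle is the cross-client cluster correspondence: the server averages centroids index-by-index, yet each client runs $k$-means independently on its own weights, so label $k$ need not refer to the same rows on different clients. I would address this by invoking the shared-seed and common-$\beta$ assumption to fix a consistent initialization and labeling, and by observing that in the regime governing the limit—$No_c$ near $r$—clusters degenerate to near-singletons, making the alignment to row indices unambiguous. Making the intermediate-$No_c$ bound fully rigorous (as opposed to just the endpoint) is precisely where this correspondence hypothesis does the work, so I would state it explicitly rather than leave it implicit.
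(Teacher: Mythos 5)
Your proposal is correct and reaches the same endpoint conclusion as the paper, but via a genuinely different error decomposition. The paper writes the per-parameter error as $\varepsilon_{i,l} = (w_{\text{FedAvg},l} - w_{i,l}) \pm (c_{i,j} - c_{\text{avg},j})$ and bounds the two pieces by a cross-client heterogeneity constant $B$ and a centroid-magnitude constant $C$; it then asserts (without real justification) that $B$ and $C$ shrink as $No_c$ grows, even though the first piece $w_{\text{FedAvg},l} - w_{i,l}$ does not depend on the clustering at all --- only the cancellation between the two pieces drives the error to zero in the limit. Your decomposition into a local quantization residual $w_i[p] - c_{i,k(p)}$ plus the deviation $c_{\text{avg},k(p)} - w_{\text{FedAvg}}[p]$ is cleaner, because both of your terms are individually controlled by within-cluster dispersion and genuinely vanish as the clustering refines; your endpoint argument at $No_c = r$ coincides with the paper's final corollary. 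The one caveat --- which you correctly flag as the main obstacle --- is the cross-client cluster correspondence: bounding $\frac{1}{n}\sum_j \lvert c_{j,k(p)} - w_j[p]\rvert$ by cluster diameters requires that row $p$ lie in the cluster labeled $k(p)$ on \emph{every} client, not just client $i$. The paper's proof silently assumes this same alignment when it averages centroids index-by-index and treats parameter $l$ as belonging to a common cluster $j$ across clients, so your explicit statement of the correspondence hypothesis (justified by the shared seed, common $\beta$, and the near-singleton regime) is an improvement in rigor rather than a defect relative to the published argument.
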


\begin{proof}
\noindent The proof is provided in the Appendix.
\end{proof}

\noindent\emph{Convergence to FedAvg:}
The process ensures that the average of the clients' updated models equals the FedAvg global model. As 
$No_c \to r$, where $r$ is the number of parameters in each layer, each client's approximated model gets close to the FedAvg global model with some approximation error which is defined and analyzed in the Appendix.

\noindent\emph{Privacy and Communication Efficiency:}
Transmitting only $No_c$ centroids reduces communication overhead when $No_c<<r$. Clustering introduces quantization, which may enhance privacy by obscuring individual parameter details.

\noindent\emph{Trade-off:}

\noindent Smaller $No_c$:
\begin{enumerate}
    \item Greater communication efficiency and potential privacy benefits.
    \item Larger approximation error to FedAvg, causing utility loss
\end{enumerate}

\noindent Larger $No_c$:
\begin{enumerate}
    \item Better approximation to FedAvg, leading to better utility
    \item Increased communication cost.
\end{enumerate}

\section{Experimental Results}
\label{sec:experiments}
This section presents the experimental results of our Trustformer method applied to a translation task, enhanced with Intel Software Guard Extensions (SGX) for increased security and privacy. The primary objective of these experiments is to evaluate the performance and feasibility of applying the proposed federated learning to Transformer-based models for translation, particularly in a scenario where data privacy and security are paramount. The code and data are provided in our GitHub repository for replication~\cite{trustformergithub}. The experiments described here are designed to investigate the following key aspects:
\begin{itemize}

    \item We evaluate whether our approach converges to the global model to experimentally validate the Theorem~\ref{theo:centroids}. We demonstrate that by sufficiently increasing the number of clusters, the training loss of our model matches the loss of the baseline (FedAvg).
    \item We measure the training time of converging to the global model.
    \item We evaluate the quality of the approximated global model of the translation task in all clients separately. This quality is measured by multiple score metrics such as BLEU~~\cite{post2018call}, METEOR~\cite{denkowski2014meteor}, and BERT F1~\cite{zhang2019bertscore}. 
    \item We measure the effects of the number of clusters on the training time.
    \item We measure the communication overhead by observing the volume of data being sent to the network at each round of training by each client.
    % \item As the ablation studies, we set various aggregation frequencies ($Agg_fr$) and evaluate our work performance.
\end{itemize}

We compared the performance of our approach against recent privacy-preserving Transformer contributions and the baseline (FedAvg)~\cite{mcmahan2017communication}. We benchmarked our approach against  DP-FedSAM~\cite{shi2023make}, DP-FedAvg~\cite{mcmahan2018learning}, and DP-BLUR-LUS~\cite{chen2022fedtune}. Since all the comparison methods are based on differential privacy, we set $\epsilon=1$ for all of them. Setting $\epsilon=1$ guarantees a utility-privacy trade-off. For a fair comparison, we considered $\delta=0.001$ for all comparison methods. Also, since the performance of DP-BLUR-LUS~\cite{chen2022fedtune} depends on another factor which is called sparsification ratio ($c$), we considered the default value of their implementation, which is $c=1e-5$. We apply gradient clipping to ensure that each client's gradient norm is bounded by a fixed threshold \(B\). 
If $\|\mathbf{g}_i\|_2 > B$, the gradient \( \mathbf{g}_i \) is scaled to $ \frac{B}{\|\mathbf{g}_i\|_2} \mathbf{g}_i$. The clipping in all DP mechanisms in this paper is calculated by Equation~(\ref{eq:clipping}) with  $B=0.5$. This step prevents any single client’s update from disproportionately influencing the global model leading to gradient explosions. Clipping bound is essential for all DP mechanisms. 

\begin{equation}
\label{eq:clipping}
    \mathbf{g}_i^{\text{clipped}} = \mathbf{g}_i \cdot \min\left(1, \frac{B}{\|\mathbf{g}_i\|_2}\right)
\end{equation}

Through extensive experiments, we aim to provide a comprehensive understanding of the Trustformer's capabilities and limitations, thereby offering insights that could guide future research and development in the intersection of federated learning, NLP, and secure computation. The subsequent sections detail the experimental setup, evaluation metrics, and a thorough analysis of the results obtained.

\subsection{Experimental Setup}
In this section, we describe the experimental setup used to evaluate the performance of the proposed Trustformer for the translation task from Russian to English. The setup encompasses the datasets, model architecture, federated learning configuration, evaluation metrics, and considerations for reproducibility.

\subsubsection{Computation Environment}
All experiments were conducted using the high-performance computing resources provided by the Digital Research Alliance of Canada~\cite{digitalresearchalliance}

 \subsubsection{Dataset}
We utilize the WMT19~\cite{barrault2019findings} dataset for our translation experiments, focusing on the Russian-to-English translation task. The WMT19 dataset is a widely recognized benchmark in machine translation, providing high-quality parallel corpora for various language pairs. The dataset is preprocessed and tokenized for our experiments using the Cross-lingual Language Model (XLM) tokenizer~\cite{lample2019cross}, proposed by Facebook AI, which is well-suited for handling multilingual text. As mentioned in Table~\ref{tab:experimental_settings}, we used 20,000 distinct records of this dataset to train each client. Since we considered three clients in our work, in total, we have 60,000 records in all clients. Also, we keep a held-out test set of 1,000 records for testing the approximated global model. 

\subsubsection{Model Architecture}
The core of our experimental setup is a Transformer model adapted for federated learning and Intel SGX. The Transformer architecture is known for its effectiveness in sequence-to-sequence tasks such as translation, owing to its self-attention mechanisms and parallelizable structure. The model parameters are initialized using standard practices, and the architecture is designed to balance performance and computational efficiency, making it suitable for deployment in a federated learning environment. Table~\ref{tab:experimental_settings} lists the hyper-parameters used in our experiments. In the following, we briefly explain important key parameters of the Trustformer architecture. 

\begin{table}[h]
\centering
\caption{Experimental setup}
\label{tab:experimental_settings}
  \scalebox{0.8}{
\begin{tabular}{ll}
 \toprule
\textbf{Parameter} & \textbf{Value} \\
\midrule
Source Vocabulary Size & 250,000 \\
Target Vocabulary Size & 250,000 \\
Model Dimension & 256 \\
Number of Attention Heads & 8 \\
Number of Layers & 6 \\
Feed-Forward Dimension & 512 \\
Maximum Sequence Length & 50 \\
Dropout Rate & 0.1 \\
Optimizer & Adam\\
Loss Function & Cross Entropy \\
Dataset size in each client & 20,000 \\
Test Dataset size & 1,000 \\
Batch Size & 20 \\
Number of Epochs & 10 \\
GPU Device & Tesla P100-PCIE-12GB \\
Number of Clients & 3 \\
Learning Rate & 0.001 \\
Weight Decay & 0.01 \\
Aggregation Frequency & 1 \\
\bottomrule
\end{tabular}}
\end{table}

\noindent\textbf{Source and Target Vocabulary Size}: They Specify the number of unique tokens in the source and target vocabularies, respectively. They set to 250,000. The Cross-lingual Language Model (XLM) tokenizer is used to preprocess text by converting it into a sequence of tokens the model can understand. It is designed to handle multiple languages, making it suitable for translation or cross-lingual understanding tasks. The tokenizer efficiently manages large vocabularies and applies Byte Pair Encoding (BPE) to split words into subword units, which helps capture the nuances of different languages and improves the model's ability to handle rare or unseen words.
    
    \noindent\textbf{Model Dimension}: The dimension of the model's layers is set to 256. It refers to the size of the hidden layers and the embedding space in the Transformer model. It determines the depth of the representations learned by the model and directly influences its capacity and performance.
    
    \noindent\textbf{Number of Attention Heads}: The number of parallel attention heads in the multi-head attention mechanism is set to 8. In the context of Transformer models, the number of attention heads refers to the number of parallel attention mechanisms, or "heads," used in the multi-head attention mechanism. Multi-head attention is a core component of Transformer architectures, such as those used in models like BERT, GPT, and the original Transformer model.
    
   \noindent\textbf{Number of Layers}: The number of layers in the Transformer model is set to 6. The number of layers in a Transformer model refers to the depth of the network, indicating how many layers of Transformers (consisting of self-attention and feed-forward sub-layers) the model has. This parameter is crucial for determining the model's capacity to learn and represent complex patterns in the data. 
    
    \noindent\textbf{Feed-Forward Dimension}: The dimension of the feed-forward layer is set to 512. The feed-forward dimension refers to the size of the intermediate layer in the position-wise feed-forward networks within each Transformer layer. This parameter is critical for determining the model's capacity to learn and process complex features of the input data.

    \noindent\textbf{Number of Clients}: The number of clients participating in training the Transformer is set to 3.
    
   \noindent\textbf{Aggregation Frequency}: The frequency of aggregating the model parameters is set to 1 epoch, which means after each round of local training, we do aggregation of the model using our proposed aggregation method. Doing aggregation after each iteration is not really necessary due to recent research showing that client's models in FL can have only one aggregation~\cite{guha2019one}. However, we considered the worst case of aggregation in our experiments.

\subsubsection{Additional Security Measures}
Our federated learning framework involves multiple decentralized devices (or clients) equipped with an Intel SGX enclave to ensure secure computation. Each client possesses a subset of the dataset and performs local training within the secure enclave provided by Intel SGX. The enclaves protect the data and model parameters during computation. As illustrated in  Figure~\ref{fig:architecture}, after $k$-means clustering, clients attest the server, then seal their local centroids and send them to the server. The server unseals the local centroids, aggregates all local centroids, and averages them to reach the global centroids. Afterward, the server seals the global centroids and sends them back to all clients for the next round of training. We used the Trustformer aggregation mechanism as detailed in Figure~\ref{fig:aggregation} to aggregate updates from all clients and converge to the global model. In this paper,  we simulated Intel SGX using a fixed key and AES cryptography technique to avoid the complexities of Intel SGX Software Development Kit (SDK) and ensure reproducibility for readers. This setup simplifies the implementation while maintaining the security principles of Intel SGX.

\subsection{Evaluation Metrics}
To assess the performance of the proposed Trustformer framework, we use a comprehensive set of evaluation metrics focusing on translation quality and model efficiency. In this section, we briefly explain these metrics. Since there is no fixed translation for a sentence, common evaluation metrics like accuracy are not useful in translation task evaluation. Therefore, we need other evaluation metrics that are not very common and can assess how similar the meaning of the translated output sequence is to the ground-truth sequence. These metrics are listed below, and all of them range from 0 to 1, where 1 indicates the high quality of translation, and 0 indicates the low quality of translation. 

\noindent\textbf{Bilingual Evaluation Understudy (BLEU)}:
Measures the precision of n-grams in the translated text compared to a reference translation. Higher BLEU scores~\cite{post2018call} indicate better translation quality.
% \subsubsection{Recall-Oriented Understudy for Gisting Evaluation (ROUGE):} Evaluates the overlap of n-grams between the translated text and the reference. We report ROUGE-1, ROUGE-2, and ROUGE-L scores to capture the precision and recall of unigrams, bigrams, and trigrams, respectively.

\noindent\textbf{Metric for Evaluation of Translation with Explicit ORdering (METEOR)}: Considers precision, recall, and synonymy, providing a more holistic measure of translation quality~\cite{denkowski2014meteor}.

\noindent\textbf{Bidirectional Encoder Representations from Transformers F1 (BERT F1)}: BERT is a pre-trained Transformer model designed to understand the context of words in a sentence by looking at both directions (left-to-right and right-to-left) simultaneously. The F1 score is a harmonic mean of precision and recall, providing a single metric that balances false positives and false negatives. It is particularly useful in scenarios where there is an imbalance in classes. The BERT F1~\cite{zhang2019bertscore} score explicitly evaluates how well the Transformer model is performing on the translation task. It is an essential metric for determining the model's practical usability and effectiveness in real-world applications, providing a balanced view of its precision and recall capabilities using the BERT pre-trained model. 

\noindent\textbf{Training Time:} This metric measures the time taken to train the global model across different configurations, providing insights into the computational efficiency of the federated learning setup.

\noindent\textbf{Model Size: } This metric measures the size of the model that should be transmitted between one client and the server. This metric is also known as network overhead in our work.

The above metrics provide a comprehensive evaluation of the Trustformer's capabilities in a translation task, allowing us to analyze its effectiveness and identify areas for improvement.

\subsection{Evaluation Results}

\begin{figure}[htbp]
  \centering
  \includegraphics[width=.5\textwidth]{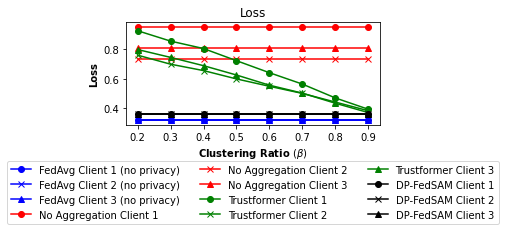} % Replace 'example-image' with the filename of your image
  \caption{Loss value in three clients with the varying number of clusters. FedAvg is the baseline method which provides no privacy. However, DP-FedSAM and Trustformer are privacy-preserving methods. }
  \label{fig:loss1}
\end{figure}

\begin{figure}[htbp]
  \centering
  \includegraphics[width=.5\textwidth]{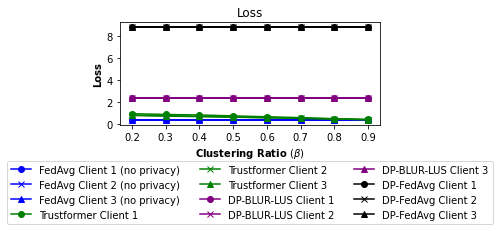} % Replace 'example-image' with the filename of your image
  \caption{Loss value in three clients with the varying number of clusters. FedAvg is the baseline method which provides no privacy. However, DP-FedAvg, DP-BLUR-LUS, and Trustformer are privacy-preserving methods. }
  \label{fig:loss2}
\end{figure}

\subsubsection{Loss against baseline}
\label{subsec:loss}
We conducted multiple experiments with various clustering ratios ($\beta$) to prove that our model works as expected according to Theorem~\ref{theo:centroids}. Figure~\ref{fig:loss1} illustrates how our proposed model converges to the global model. The overlap of the loss plots of clients shows that the clients are converging to the same model, resulting in the same loss when each client model is tested on the same dataset. 
%without revealing the global model to the server since the server has only the centroids of client updates. 
The $X$ axis shows the value of $\beta$, which can change the number of clusters as per Equation (\ref{eq:no-c}). We compared our proposed model against the baseline FedAvg~\cite{mcmahan2017communication} algorithm with no privacy.  In addition to the baseline, we compared our model to a scenario without client aggregation. In this case, each client trains its local model independently, with no aggregation between them, but they all use the same validation data for evaluation. 

Looking at Figure~\ref{fig:loss1}, we observe that using the same validation dataset on three clients, with no aggregation, there are many different loss values between the clients, and they are not converging to a single value. DP-FedSAM~\cite{shi2023make} shows a loss value worse than the baseline, but the convergence to the global model is guaranteed since the plots of all clients are overlapped. The loss value of the DP-FedSAM is close to that of the Trustformer when $\beta=0.9$. However, DP-FedSAM has higher communication overhead, as shown in subsection~\ref{sec:modelsize}.  In Trustformer, as shown in Figure~\ref{fig:loss1}, increasing the value of $\beta$ brings us closer to the baseline. However, this can also increase the model's size as per subsection~\ref{sec:modelsize}. Figure~\ref{fig:loss1} reveals that when the number of clusters is high enough (e.g. $\beta=0.9$), all clients will have the same loss value, meaning they obtain the global model without revealing the total model weight. This figure also experimentally underscores the correctness of Theorem 1, in which we proved that when we increase the number of clusters to the number of records, we will have the convergence equal to FedAvg.

Figure~\ref{fig:loss2} also illustrates the loss values obtained by Trustformer and other comparison methods. In Figure~\ref{fig:loss2}, DP-FedAvg~\cite{mcmahan2018learning} depicts the convergence to the global model. In contrast, higher loss leads to low translation quality, as shown in subsection~\ref{sec:quality}. DP-BLUR-LUS~\cite{chen2022fedtune} provides lower loss than DP-FedAvg during training and complete convergence between clients as expected. However, DP-BLUR-LUS demonstrates lower quality compared to Trustformer, as discussed in subsection subsection~\ref{sec:quality}. The reported loss values in Figure~\ref{fig:loss1} and Figure~\ref{fig:loss2} are the loss values of 1000 held-out test data during the training of clients. This means that the test data on all clients are the same. We chose the same test data to see how each client's model behaves to measure if any convergence happens among the local models.

\subsubsection{Model size}
\label{sec:modelsize}

\begin{table}[h]
\centering
\caption{The volume of data in transit between each client and the server per one epoch with various numbers of clusters.}
\label{tab:data_size}
  \scalebox{0.8}{
\begin{tabular}{l c c}
\toprule
\textbf{Method} & \textbf{Data size} & \textbf{Private?} \\
\midrule
\textbf{Baseline (FedAvg)~\cite{mcmahan2017communication}}  & 782,003 KB & No \\
\textbf{DP-FedAvg~\cite{mcmahan2018learning}} & 782,003 KB & Yes \\
\textbf{DP-FedSAM~\cite{shi2023make}} & 782,003 KB & Yes \\
\textbf{DP-BLUR-LUS (with $c=1e-5$)~\cite{cheng2022differentially}} & 515,952 KB & Yes \\
\midrule
\textbf{Trustformer ($\beta=0.1$)} & 78,019 KB & Yes \\
\textbf{Trustformer ($\beta=0.2$)} & 156,127 KB & Yes \\
\textbf{Trustformer ($\beta=0.3$)} & 234,140 KB & Yes \\
\textbf{Trustformer ($\beta=0.4$)} & 312,248 KB & Yes \\
\textbf{Trustformer ($\beta=0.5$)} & 390,368 KB & Yes \\
\textbf{Trustformer ($\beta=0.6$)} & 468,380 KB & Yes \\
\textbf{Trustformer ($\beta=0.7$)} & 546,488 KB & Yes \\
\textbf{Trustformer ($\beta=0.8$)} & 624,500 KB & Yes \\
\textbf{Trustformer ($\beta=0.9$)} & 702,608 KB & Yes \\
\bottomrule
\end{tabular}}
\end{table}

\begin{table*}[h]
\centering
\caption{Time in seconds used for training all clients to converge to the global model for one epoch.}
\label{tab:training_time}
  \scalebox{0.8}{
\begin{tabular}{l c c c c c}
\toprule
\textbf{Method} & \textbf{Training time (sec)} & \textbf{clustering time (sec)} & \textbf{Aggregation time (sec)} & \textbf{total (sec)} & privacy?\\
\midrule
\textbf{Baseline (FedAvg)~\cite{mcmahan2017communication}} & 479 & 0 & 17 & 496 & No \\
\textbf{DP-FedAvg($\epsilon=1$)~\cite{mcmahan2018learning}} & 563 & 0 & 11 & 664 & Yes \\
\textbf{DP-FedSAM($\epsilon=1$)~\cite{shi2023make}} & 442 & 0 & 10 & 452 & Yes\\
\textbf{DP-BLUR-LUS($\epsilon=1, c=1e-5$)~\cite{cheng2022differentially}} & 417 & 0 & 4 & 421 & Yes\\
\midrule
\textbf{Trustformer ($\beta=0.1$)}  & 491 & 564 & 1 & 1,056 & Yes\\
\textbf{Trustformer ($\beta=0.2$)}  & 484 & 1,287 & 4 & 1,775 & Yes\\
\textbf{Trustformer ($\beta=0.3$)}  & 478 & 1,090 & 5 & 1,573 & Yes\\
\textbf{Trustformer ($\beta=0.4$)}  & 473 & 2,310 & 7 & 2,790 & Yes\\
\textbf{Trustformer ($\beta=0.5$)}  & 479 & 3,291 & 10 & 3,780 & Yes\\
\textbf{Trustformer ($\beta=0.6$)}  & 475 & 3,438 & 11 &  3,924 & Yes \\
\textbf{Trustformer ($\beta=0.7$)}  & 476 & 3,296 & 15 & 3,787 & Yes \\
\textbf{Trustformer ($\beta=0.8$)}  & 472 & 2,502 & 17 & 2,991 & Yes\\
\textbf{Trustformer ($\beta=0.9$)}  & 477 & 1,156 & 19 & 1,652 & Yes \\

\bottomrule
\end{tabular}}
\end{table*}

\begin{table}[h]
\centering
\caption{Total time in hours to converge to the global model based on various aggregation frequencies.}
\label{tab:total_time}
  \scalebox{0.8}{
\begin{tabular}{p{3cm} c c c c c}
\toprule
\textbf{Method} &\textbf{$Agg_{fr}=1$}&\textbf{$Agg_{fr}=2$} &\textbf{$Agg_{fr}=5$} &\textbf{$Agg_{fr}=10$}  \\
\midrule
\textbf{Baseline (FedAvg)~\cite{mcmahan2017communication}} & 2.08 & 1.71 &1.48  & 1.40 \\
\textbf{DP-FedAvg($\epsilon=1$)~\cite{mcmahan2018learning}} & 1.87 & 1.72 & 1.62 & 1.59 \\
\textbf{DP-FedSAM($\epsilon=1$)~\cite{shi2023make}} & 1.50 & 1.36 &  1.28 & 1.25 \\
\textbf{DP-BLUR-LUS($\epsilon=1, c=1e-5$)~\cite{cheng2022differentially}} & 1.27 & 1.21 & 1.18 & 1.17 \\
\midrule
\textbf{Trustformer ($\beta=0.1$)}  & 2.90 & 2.16 & 1.68 & 1.52 \\
\textbf{Trustformer ($\beta=0.2$)}  & 5.03 & 3.19 & 2.08 & 1.71 \\
\textbf{Trustformer ($\beta=0.3$)}  & 4.49 & 2.90 & 1.96 & 1.64 \\
\textbf{Trustformer ($\beta=0.4$)}  & 7.93& 4.62 & 2.63 & 1.97 \\
\textbf{Trustformer ($\beta=0.5$)}  & 10.78 & 4.85 & 3.22 & 2.27 \\
\textbf{Trustformer ($\beta=0.6$)}  & 11.17 & 6.24 & 3.29& 2.30 \\
\textbf{Trustformer ($\beta=0.7$)}  & 10.90 & 6.11 & 3.23& 2.28 \\
\textbf{Trustformer ($\beta=0.8$)}  & 8.64 & 4.99 & 2.77 & 2.04 \\
\textbf{Trustformer ($\beta=0.9$)}  & 5.05 & 3.19 & 2.07 & 1.69 \\

\bottomrule
\end{tabular}}
\end{table}

Table \ref{tab:data_size} underscores our approach's efficacy in data transmission between clients and the server. This significant difference is because we send each trainable layer's centroids of clustering results. This idea significantly reduces the amount of data that has to be passed between entities. Considering Figure~\ref{fig:loss1}, Figure~\ref{fig:loss2}, and Table~\ref{tab:data_size}, we can conclude that even if we consider $\beta=0.9$, we still have about 80MB (10\%), less transmission overhead than the FedAvg, DP-FedAvg, and DP-FedSAM. Although Trustformer shows a larger data size with $\beta=0.9$ compared to DP-BLUR-LUS, the quality of the results of DP-BLUR-LUS as shown in subsection \ref{sec:quality} is significantly lower than that of Trustformer.

\subsubsection{Training Time}

Table~\ref{tab:training_time}, and Table~\ref{tab:total_time}, compare our approach with its counterparts in terms of training time. Table~\ref{tab:training_time} shows each component's training time per epoch, and Table~\ref{tab:total_time} shows the total training time ($T_{total}$) when we change the aggregation frequency. In these two tables, We increase the number of clusters to observe the differences between Trustformer and the other methods. The total reported time in Table~\ref{tab:total_time} is calculated using Equation~(\ref{eq:training_time}). 
\begin{equation}
\begin{split}
T_{total} = & \left(e.\max_{\forall i \in \{1, 2, \dots, n\}} (T_{l}^{(i)})\right) \\
& + \left(\frac{e}{Agg_{fr}}.\max_{\forall i \in \{1, 2, \dots, n\}}(T_{c}^{(i)} + T_{agg})\right)
\end{split}
\label{eq:training_time}
\end{equation}
\noindent where $n$ is the number of clients, $e$ is the number of epochs, which in our experiments equals 10 (i.e. $e=10$). $T_l^{(i)}$ is the training time of the local model in $i$-th client, $T_c^{(i)}$ is the clustering time after each round of training in client $i$, and $T_{agg}$ is the time used for aggregation of the local centroids by the server for each epoch. $Agg_{fr}$ is the aggregation frequency used by our work, which shows how often the aggregation should happen.  
We did not consider operating system limitations and workload that may affect the reported time. In Equation~\ref{eq:training_time}, we observe that to have a trusted aggregation between clients and the server without leaking information, we need to do more calculations on the client's side, which is time-consuming based on two variables: I) the number of clusters being sent between the clients and the server, and II) the frequency of aggregations between clients and the server. The clustering time in our approach varies significantly with changes in the clustering ratio. For instance, when $\beta=0.6$, we have the highest clustering time equal to 3,438 seconds, but when we increase the number of clusters to $\beta=0.9$, the clustering time reduces to 1,156 seconds. This is justified by the fact that since we have more centroids in $\beta=0.9$, the locations of the centroids are not changed after some certain rounds of iterations in $k$-means clustering, and it converges to a result faster than when we have less number of clusters in $\beta=0.6$. %This shows the worst case of Trustformer by having the aggregation frequency $Agg_{fr}=1$, which means for every epoch, the aggregation happens. Because we have 10 epochs, then 10 aggregations happen. If we do aggregation one or two times during the training process, the total amount of training time will be reduced. 

Table~\ref{tab:total_time} tabulates the total time of training the whole model. When we have $Agg_{fr}=1$, since there are 10 epochs, we do aggregation 10 times. Therefore, the Trustformer training time is much higher than the state-of-the-art methods. However, if we consider one-shot learning, which means doing aggregation one time in training ($Agg_{fr}=10$), we see much less difference between the Trustformer and the state-of-the-art methods in terms of training time. Whenever $Agg_{fr}=e$, it means every 10 epochs, we have one aggregation, which equals applying one-shot learning~\cite{guha2019one}.

\subsubsection{Federated Training Quality}
\label{sec:quality}
In translation tasks, using accuracy as a metric is insufficient due to the nuanced nature of languages~\cite{chatzikoumi2020evaluate}. Accuracy typically measures the exact match between predicted and reference outputs, which is too rigid for evaluating translations where multiple valid translations can exist for a single source sentence. Instead, alternative metrics like BLEU score~\cite{post2018call}, METEOR~\cite{denkowski2014meteor}, or BERT F1~\cite{zhang2019bertscore} are more appropriate. BLEU score assesses the quality of the translation by comparing n-grams of the candidate translation to n-grams of the reference translations, capturing partial matches and fluency. METEOR goes further by aligning the words in the candidate and reference translations based on synonyms, stemming, and paraphrasing, offering a more flexible evaluation of translation quality. BERT F1 leverages semantic similarity by using a pre-trained language model to compare the embeddings of the candidate and reference translations, ensuring that the meaning is preserved even if the exact wording differs. These metrics provide a more comprehensive evaluation of translation quality, reflecting the variability and richness of human language. Figure~\ref{fig:performance} highlights the performance of our approach compared to state-of-the-art methods. As depicted, Trustformer yields better results in terms of BERT F1 compared to the other methods and produces results comparable to  DP-FedSAM when evaluated using BLEU  and METEOR scores. We can conclude that although the results are similar to those of DP-FedSAM, Trustformer has significantly lower communication overhead, as shown in Table~\ref{tab:data_size}.

\section{Security Discussion}
\label{sec:discussion}
All exchanged data between the clients' enclaves and the server's enclave is encrypted through the Trusted Execution Environments (TEE) using SGX. Only clients can access their data. We acknowledge that certain security attacks, such as page fault attacks, target TEE~\cite{shinde2016preventing}. However, additional security measures can mitigate these types of attacks~\cite{lang2022mole}, though they are beyond the scope of this paper. To tackle the first threat model discussed in subsection~\ref{sec:threat}, TEE ensures that centroids being transmitted are not accessible by the central server and external attackers. In addition,  TEE ensures that all clients are legitimate and that their centroid updates have not been tampered with.  For the second threat model discussed in~\ref{sec:threat}, where the server is compromised, the data transferred between clients and the server consists of the centroids of the clusters, which is a generalized version of individual model weights and not exactly the model weights. In addition, TEE protects the exchanged centroids together with the required calculations computed by the central server. Therefore, the untrusted server cannot infer any useful information. 

\begin{figure}[htbp]
  \centering
  \includegraphics[width=.46\textwidth]{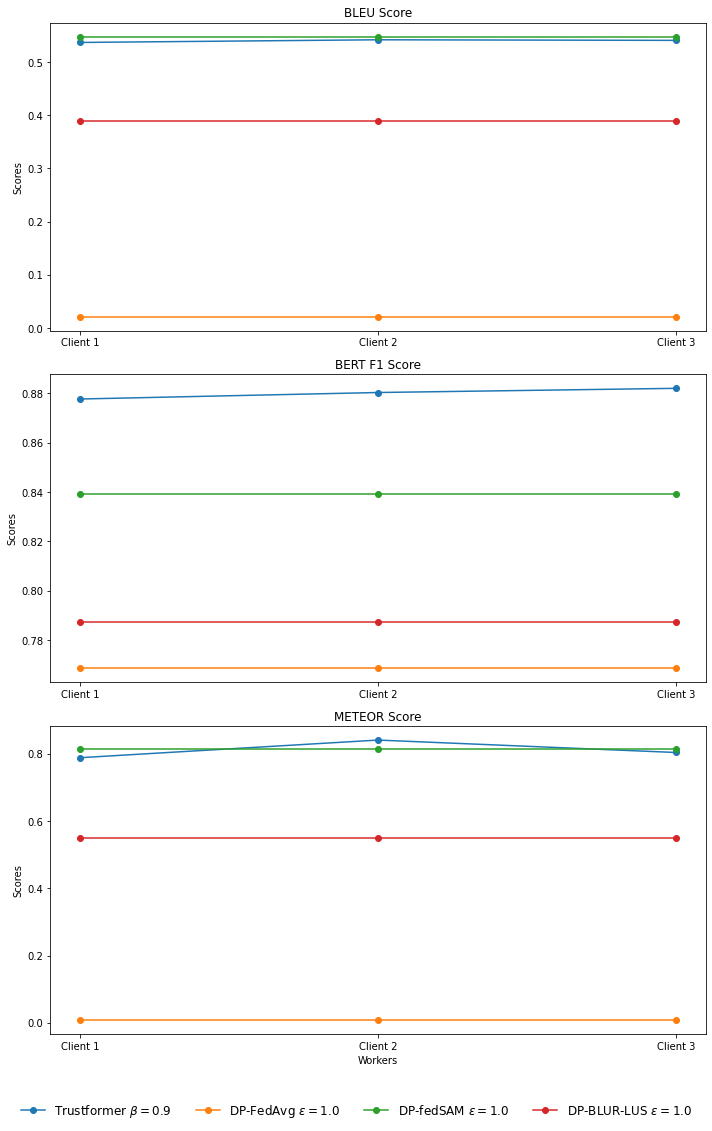} % Replace 'example-image' with the filename of your image
  \caption{Global model quality on each client on 1000 held-out test data.}
  \label{fig:performance}
\end{figure}

%\section{Theories}
%\label{sec:theory}
%\textbf{Theorem 1}
%Given $l$ matrices $M_1, M_2, \ldots, M_l$  each of size  $NO_c \times f$, if we cluster these matrices separately and consider the centroids, then average the centroids and move cluster points according to the average, as the number of clusters $k$ increases, this process is equivalent to averaging the matrices element-wise (FedAvg).

\section{Conclusion}
\label{sec:conclusion}
In this paper, we introduced Trustformer, a trusted federated Transformer. Trustformer incorporates federated learning into Transformers, leading to a secure environment for training a Transformer from scratch. Trustformer applies $k$-means on every layer of Transformers inside each client and averages the centroids of each layer in the central server.  This approach allows Trustformer to not only prevent sharing the model weights,  but also to prevent communication overhead by using centroids of the models instead of the model weights. To further enhance security, Trustformer utilizes Intel SGX for remote attestation and secure transfer of centroids between clients and the central server. Trustformer opens a new research direction for the secure aggregation of federated learning for large models. Future research direction can lie in investigating how well Trustformer can perform on personalized federated learning models~\cite{chen2022self}, as well as finding new and less complex techniques to reduce the size of centroids being aggregated with dimensionality reduction techniques.

\bibliographystyle{acm}

\appendix

\section*{Appendix}
\subsection*{Definition and notations}
\begin{enumerate}
    \item Total number of clients: $n$
    \item Each client $i$ has a local model parameter vector $w_i\in \mathbb{R}^r$. For simplicity, we assume there is only 1 hidden layer, meaning that $W^i_{local}=\{w_1^i\}$ in the local models. We show $w_1^i$ as $w_i$. Therefore, $w_i = [w_{i,1}, w_{i,2}, ..., w_{i,r}]$.
    \item Each client clusters their parameter vector $w_i$ into $No_c$ clusters using $k$-means clustering.
    \item For client $i$, clusters are denoted by $S_{i,j}$ for $j=1,2, .., No_c$.
    \item Centroids for client $i$: $c_{i,j}=\frac{1}{|S_{i,j}|} \Sigma_{l\in S_{i,j}} w_{i,l}$.
    \item Client $i$ sends centroids $c_i=[\hat{c}_{i,1}, \hat{c}_{i,2},\dots, \hat{c}_{i,No_c}]$ to the server
    \item Since we have just one layer in the clients, the $C_{global}=\{c_{avg,1}\}$. We show $c_{avg,1}$ as $c_{avg}$ for simplicity. The server aggregates centroids: $c_{avg,j}=\frac{1}{n}\Sigma_{i=1}^{n} c_{i,j}$ for $j= 1, 2, \dots,No_c$ and sends it to clients.
    \item Clients approximate the global model and since we have only 1 layer, $W_{global}=\{w_1\}$. For simplicity, we call the global weights on client $i$ as $w_i^{global}$. Clients approximate $w_i^{global}$ using $c_{i,j}$, $w_i$, and $c_{avg,j}$.
    \end{enumerate}

\section*{Theorem and Proofs}

\noindent \textbf{Theorem 1.}
%\label{theo:centroids}
In a federated learning setting where each client clusters their local model parameters \( w_i \in \mathbb{R}^r \) into \( No_c \) clusters and updates their model by adjusting parameters based on the differences between global and local centroids, the clients' updated models \( w_i^{\text{global}} \) converge to the Federated Averaging (FedAvg) global model \( w_{\text{FedAvg}} = \frac{1}{n} \sum_{i=1}^{n} w_i \) as \( No_c \to r \).

\subsection*{Supporting Lemmas and Corollaries}

\begin{lemma}[Parameter Adjustment Formula]
For each client \( i \) and parameter \( l \in S_{i,j} \) belonging to cluster \( j \), the updated parameter after Algorithm~\ref{alg:estimate-gm} is:
\[
w_{i,l}^{\text{global}} = w_{i,l} + (c_{\text{avg},j} - c_{i,j}),
\]
where:
\begin{align*}
c_{i,j} &= \frac{1}{|S_{i,j}|} \sum_{l \in S_{i,j}} w_{i,l} \quad \text{(Local centroid)}, \\
c_{\text{avg},j} &= \frac{1}{n} \sum_{i=1}^{n} c_{i,j} \quad \text{(Global centroid)}.
\end{align*}
\end{lemma}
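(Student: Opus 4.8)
The plan is to prove this as a bookkeeping identity: the claimed formula is simply what the client-side pipeline of Algorithms~\ref{alg:apply-clustering}, \ref{alg:server-aggregation}, \ref{alg:compute-difference}, and~\ref{alg:estimate-gm} computes, so I would verify it by tracing a single parameter $w_{i,l}$ through each stage in the single-layer setting fixed by the Appendix notation (where $W^i_{local}=\{w_i\}$). This is a structural claim about the algorithm's output rather than an approximation result, so no estimation or limiting argument is involved.

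First I would record the clustering output. By the defining property of $k$-means, Algorithm~\ref{alg:apply-clustering} returns for client $i$ a partition $\{S_{i,j}\}_{j=1}^{No_c}$ of the index set together with a membership vector satisfying $mem_i[l]=j \iff l\in S_{i,j}$, and each centroid equals the mean of its cluster, $c_{i,j}=\frac{1}{|S_{i,j}|}\sum_{l\in S_{i,j}} w_{i,l}$; this is exactly the first displayed equation. Next, Algorithm~\ref{alg:server-aggregation} (equivalently Equation~(\ref{eq:aggregation})) layer-wise averages the received centroids, yielding $c_{\text{avg},j}=\frac{1}{n}\sum_{i=1}^{n} c_{i,j}$, the second displayed equation; I would note that the SGX seal/unseal operations are value-preserving and hence do not affect this identity. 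Then Algorithm~\ref{alg:compute-difference} performs the element-wise subtraction $Diff_{i,j}=c_{\text{avg},j}-c_{i,j}$ for each cluster $j$, and Algorithm~\ref{alg:estimate-gm} adds to each parameter the difference indexed by its own membership, so for $l$ with $mem_i[l]=j$ we obtain $w_{i,l}^{\text{global}}=w_{i,l}+Diff_{i,j}=w_{i,l}+(c_{\text{avg},j}-c_{i,j})$, which is the claim.

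The hard part will be pinning down the index alignment in Algorithm~\ref{alg:estimate-gm}, where the membership vector is used to index into both the weight array and the difference array. I would argue carefully that a parameter assigned to cluster $j$ is shifted by the difference of precisely its own cluster's centroid (and not a misaligned index), and that the loop over the distinct membership labels touches every parameter exactly once, so that the partition $\{S_{i,j}\}$ is exhausted without overlap or omission. Once this correspondence is established, the identity follows immediately from the three definitional equalities above, completing the proof.
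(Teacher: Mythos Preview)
Your proposal is correct and takes essentially the same approach as the paper: the paper's proof is a single sentence stating that the formula follows directly from the adjustment rule in Algorithm~\ref{alg:estimate-gm}, and your argument is simply a more detailed unpacking of that same observation by tracing the parameter through Algorithms~\ref{alg:apply-clustering}--\ref{alg:estimate-gm}. Your concern about index alignment is more care than the paper itself exercises, but it does not change the underlying argument.
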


\begin{proof}
This follows directly from the adjustment rule specified in the Algorithm~\ref{alg:estimate-gm}.
\end{proof}

\begin{lemma}[Average of Updated Parameters Equals FedAvg]
The average of the updated parameters across all clients equals the FedAvg global model:
\[
w_{l}^{\text{avg,global}} = \frac{1}{n} \sum_{i=1}^{n} w_{i,l}^{\text{global}} = w_{\text{FedAvg},l}, \quad \forall l \in \{1, 2, \dots, r\}.
\]
\end{lemma}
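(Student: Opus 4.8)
The plan is to reduce the claim to a direct cancellation by substituting the adjustment rule of the preceding Parameter Adjustment Formula lemma into the definition of the per-coordinate average and isolating a single ``correction'' term. First I would fix a coordinate index $l \in \{1,\dots,r\}$ and, for each client $i$, let $j$ denote the cluster to which $w_{i,l}$ is assigned. Using $w_{i,l}^{\text{global}} = w_{i,l} + (c_{\text{avg},j} - c_{i,j})$ and averaging over the $n$ clients gives
\[
\frac{1}{n}\sum_{i=1}^{n} w_{i,l}^{\text{global}} = \underbrace{\frac{1}{n}\sum_{i=1}^{n} w_{i,l}}_{=\, w_{\text{FedAvg},l}} \;+\; \frac{1}{n}\sum_{i=1}^{n}\bigl(c_{\text{avg},j} - c_{i,j}\bigr).
\]
The first summand is exactly $w_{\text{FedAvg},l}$ by definition, so the entire argument rests on showing that the second (correction) summand vanishes.

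Next I would analyze the correction term. Assuming coordinate $l$ falls into the same cluster index $j$ in every client's partition, so that the aligned global centroid $c_{\text{avg},j}=\frac{1}{n}\sum_{i=1}^{n} c_{i,j}$ is the one that enters the update for all $i$, the term separates as $c_{\text{avg},j} - \frac{1}{n}\sum_{i=1}^{n} c_{i,j}$, which equals $c_{\text{avg},j} - c_{\text{avg},j} = 0$ directly from the server aggregation rule in Equation~(\ref{eq:aggregation}). Substituting back yields $w_l^{\text{avg,global}} = w_{\text{FedAvg},l}$ for every $l$, as required. I would emphasize that the cancellation is driven purely by the telescoping of the label-dependent centroid difference and not by any property of the raw weights themselves.

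The main obstacle I anticipate is precisely the cluster-index alignment just invoked: $k$-means run on distinct local weight vectors need not produce matching cluster labels across clients, so the global centroid $c_{\text{avg},j}$ that a client subtracts may not be the average of the centroids actually containing coordinate $l$ across all clients. To close this gap I would lean on the standing assumption that all clients share the clustering scheme, ratio $\beta$, and random seed, and make explicit that the server aggregation pairs centroids by a fixed index correspondence. I would then state the exact identity under this correspondence and note that any residual mismatch is exactly the approximation error analyzed separately in the $No_c \to r$ limit. Throughout, the delicate bookkeeping is that the label $j$ depends on both $i$ and $l$, so I would keep the per-coordinate notation careful enough that the dependence of $j$ on the client is never silently dropped.
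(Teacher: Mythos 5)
Your proof is correct and follows essentially the same route as the paper's: substitute the adjustment rule from the Parameter Adjustment Formula lemma, split the average into the FedAvg term plus a centroid-correction term, and cancel the latter using $c_{\text{avg},j} = \frac{1}{n}\sum_{i=1}^{n} c_{i,j}$. Your explicit handling of the cluster-index alignment across clients addresses a point the paper's proof silently assumes (it pulls $c_{\text{avg},j}$ out of the sum over $i$ as though $j$ did not depend on the client), so that discussion is an improvement in rigor rather than a different approach.
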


\begin{proof}
For any parameter \( l \):
\begin{align*}
w_{l}^{\text{avg,global}} &= \frac{1}{n} \sum_{i=1}^{n} \left( w_{i,l} + (c_{\text{avg},j} - c_{i,j}) \right ) \\
&= \left( \frac{1}{n} \sum_{i=1}^{n} w_{i,l} \right ) + c_{\text{avg},j} - \left( \frac{1}{n} \sum_{i=1}^{n} c_{i,j} \right ) \\
&= \frac{1}{n} \sum_{i=1}^{n} w_{i,l} \quad \text{(since \( c_{\text{avg},j} = \frac{1}{n} \sum_{i=1}^{n} c_{i,j} \))} \\
&= w_{\text{FedAvg},l}.
\end{align*}
\end{proof}

\begin{lemma}[Bound on Parameter Difference \( \delta_{i,l} \)]
Define:
\[
\delta_{i,l} = w_{\text{FedAvg},l} - w_{i,l}.
\]
Assuming that the parameters across clients satisfy \( |w_{i',l} - w_{i,l}| \leq B \) for all \( i, i' \), then:
\[
|\delta_{i,l}| \leq B.
\]
\end{lemma}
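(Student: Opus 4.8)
The plan is to reduce this to a one-line triangle-inequality estimate once the right algebraic rewriting is in place. The key observation is that the FedAvg value in coordinate $l$ is itself an average over the clients, $w_{\text{FedAvg},l} = \frac{1}{n}\sum_{i'=1}^{n} w_{i',l}$, so the quantity $\delta_{i,l}$ is the gap between the mean of a finite set of numbers and one of its members. Since an average lies within the spread of the data it averages, the hypothesis that all pairwise spreads are bounded by $B$ should immediately control this gap.

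Concretely, I would first write $w_{i,l}$ as a (trivial) average of itself over the $n$ client indices, $w_{i,l} = \frac{1}{n}\sum_{i'=1}^{n} w_{i,l}$, so that both terms of $\delta_{i,l}$ carry the same $\frac{1}{n}\sum_{i'}$ prefactor. Subtracting then gives
\[
\delta_{i,l} = w_{\text{FedAvg},l} - w_{i,l} = \frac{1}{n}\sum_{i'=1}^{n}\left(w_{i',l} - w_{i,l}\right).
\]
Next I would apply the triangle inequality to pull the absolute value inside the sum, obtaining $|\delta_{i,l}| \le \frac{1}{n}\sum_{i'=1}^{n} |w_{i',l} - w_{i,l}|$, and finally invoke the standing assumption $|w_{i',l} - w_{i,l}| \le B$ for every pair $(i,i')$ to bound each summand by $B$. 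Since there are $n$ summands each at most $B$, the $\frac{1}{n}$ prefactor cancels and we conclude $|\delta_{i,l}| \le B$, as claimed.

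Honestly, there is no real obstacle here: the statement is a direct consequence of the definition of the arithmetic mean together with the boundedness hypothesis, and the only mild subtlety is purely cosmetic. The $i'=i$ term in the sum vanishes, so in fact only $n-1$ terms are nonzero and one obtains the slightly sharper bound $|\delta_{i,l}| \le \frac{n-1}{n}B$; since $\frac{n-1}{n} \le 1$, this is fully consistent with the stated $|\delta_{i,l}| \le B$. I would present the argument with the weaker (stated) bound for simplicity, optionally remarking on the tighter constant, and use this lemma downstream to drive the approximation error to zero as $No_c \to r$.
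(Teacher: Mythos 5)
Your proof is correct and follows essentially the same route as the paper's: both rewrite $\delta_{i,l}$ as $\frac{1}{n}\sum_{i'\neq i}(w_{i',l}-w_{i,l})$, apply the triangle inequality, and note that the $n-1$ nonzero terms give the sharper bound $\frac{n-1}{n}B\le B$. (Incidentally, your write-up is cleaner than the paper's, whose final display has a typo writing $|w_{i,l}|$ where $|\delta_{i,l}|$ is meant.)
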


\begin{proof}
\begin{align*}
\delta_{i,l} &= \frac{1}{n} \sum_{i'=1}^{n} w_{i',l} - w_{i,l} \\
&= \frac{1}{n} \left( \sum_{i'=1}^{n} w_{i',l} - n w_{i,l} \right ) \\
&= \frac{1}{n} \sum_{i' \ne i}^{n} (w_{i',l} - w_{i,l} ).
\end{align*}
Taking absolute values and using triangle inequality:
\[
|w_{i,l}| \leq \frac{1}{n} \sum_{i' \ne i}^{n} |w_{i',l} - w_{i,l}| \leq \frac{n - 1}{n} B \leq B.
\]
\end{proof}

\begin{lemma}[Bound on Centroid Difference \( \eta_{i,j} \)]
Define:
\[
\eta_{i,j} = c_{i,j} - c_{\text{avg},j}.
\]
Assuming that the centroids satisfy \( |\mu_{i,j}| \leq C \) for all \( i, j \), then:
\[
|\eta_{i,j}| \leq 2C \left( \frac{n - 1}{n} \right ).
\]
\end{lemma}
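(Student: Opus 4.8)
The plan is to mirror the algebraic manipulation used in the proof of the previous lemma (the bound on $\delta_{i,l}$), now applied to the centroids rather than to the raw parameters. First I would expand $\eta_{i,j}$ by substituting the server's aggregation rule $c_{\text{avg},j} = \frac{1}{n}\sum_{i'=1}^{n} c_{i',j}$, so that
\[
\eta_{i,j} = c_{i,j} - \frac{1}{n}\sum_{i'=1}^{n} c_{i',j} = \frac{1}{n}\left(n\,c_{i,j} - \sum_{i'=1}^{n} c_{i',j}\right) = \frac{1}{n}\sum_{i' \neq i} \left(c_{i,j} - c_{i',j}\right).
\]
This rewriting isolates the deviation of client $i$'s centroid from those of the other $n-1$ clients, exactly as $\delta_{i,l}$ was written as an average of pairwise parameter gaps in the preceding lemma.

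Second, I would apply the triangle inequality to this sum of $n-1$ terms to obtain $|\eta_{i,j}| \leq \frac{1}{n}\sum_{i'\neq i}|c_{i,j} - c_{i',j}|$. The key step is then to bound each pairwise centroid gap. Using the stated hypothesis that every centroid magnitude satisfies $|\mu_{i,j}| = |c_{i,j}| \leq C$, a second application of the triangle inequality gives $|c_{i,j} - c_{i',j}| \leq |c_{i,j}| + |c_{i',j}| \leq 2C$. Substituting this uniform bound into the sum over the $n-1$ indices $i' \neq i$ yields $|\eta_{i,j}| \leq \frac{1}{n}(n-1)(2C) = 2C\left(\frac{n-1}{n}\right)$, which is the claimed inequality.

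The main point to be careful about is the origin of the factor of $2$, which is what distinguishes this bound from the analogous $|\delta_{i,l}| \leq B$ of the previous lemma. There, the hypothesis bounds the pairwise parameter differences $|w_{i',l} - w_{i,l}|$ directly, so no factor of $2$ arises; here the hypothesis instead bounds each centroid's magnitude, so every pairwise difference incurs a factor of $2$ when split via the triangle inequality. I would state this interpretation of the hypothesis explicitly to keep the two lemmas consistent. Beyond clarifying the hypothesis, the argument is an elementary averaging-and-triangle-inequality computation with no genuine obstacle; the only additional care needed is to note that if the centroids are treated as vectors, then $|\cdot|$ denotes the relevant entrywise (coordinatewise) absolute value and the same chain of inequalities applies in each coordinate.
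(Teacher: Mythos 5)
Your proof is correct and follows essentially the same route as the paper: both expand $c_{\text{avg},j}$ as the average of the $n$ centroids, apply the triangle inequality, and invoke the magnitude bound $|c_{i,j}|\leq C$ twice to produce the factor $2C\left(\frac{n-1}{n}\right)$; your pairwise-difference regrouping $\frac{1}{n}\sum_{i'\neq i}(c_{i,j}-c_{i',j})$ is algebraically identical to the paper's split into $\left(1-\frac{1}{n}\right)c_{i,j}-\frac{1}{n}\sum_{i'\neq i}c_{i',j}$. Your observation that the hypothesis $|\mu_{i,j}|\leq C$ should be read as a bound on $|c_{i,j}|$ (and that this is why the factor of $2$ appears here but not in the $\delta_{i,l}$ lemma) is also the correct reading of the paper's intent.
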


\begin{proof}
Starting with:
\begin{align*}
\eta_{i,j} &= c_{i,j} - \frac{1}{n} \sum_{i'=1}^{n} c_{i',j} \\
&= c_{i,j} - \left( \frac{1}{n} c_{i,j} + \frac{1}{n} \sum_{i' \ne i}^{n} c_{i',j} \right ) \\
&= \left( 1 - \frac{1}{n} \right ) c_{i,j} - \frac{1}{n} \sum_{i' \ne i}^{n} c_{i',j}.
\end{align*}
Taking absolute values and using triangle inequality:
\begin{align*}
|\eta_{i,j}| &\leq \left( 1 - \frac{1}{n} \right ) |c_{i,j}| + \frac{1}{n} \sum_{i' \ne i}^{n} |c_{i',j}| \\
&\leq \left( \frac{n - 1}{n} \right ) C + \left( \frac{n - 1}{n} \right ) C \\
&= 2C \left( \frac{n - 1}{n} \right ).
\end{align*}
\end{proof}

\begin{lemma}[Total Error Bound]
The total error per parameter \( \varepsilon_{i,l} \) is given by:
\[
\varepsilon_{i,l} = w_{\text{FedAvg},l} - w_{i,l}^{\text{global}} = \delta_{i,l} - \eta_{i,j},
\]
and is bounded by:
\[
|\varepsilon_{i,l}| \leq B + 2C \left( \frac{n - 1}{n} \right ).
\]
\end{lemma}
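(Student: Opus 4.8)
The plan is to obtain the claim as a one-step consequence of the triangle inequality applied on top of the three preceding lemmas, so essentially all of the analytic content is already in place. First I would substitute the Parameter Adjustment Formula, $w_{i,l}^{\text{global}} = w_{i,l} + (c_{\text{avg},j} - c_{i,j})$, directly into the definition $\varepsilon_{i,l} = w_{\text{FedAvg},l} - w_{i,l}^{\text{global}}$. Expanding and regrouping gives
\[
\varepsilon_{i,l} = \bigl(w_{\text{FedAvg},l} - w_{i,l}\bigr) - \bigl(c_{\text{avg},j} - c_{i,j}\bigr),
\]
where the first bracket is exactly $\delta_{i,l}$ by the definition used in the Bound on Parameter Difference lemma, and the second bracket is the centroid gap. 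Recalling that $\eta_{i,j} = c_{i,j} - c_{\text{avg},j}$, this second bracket equals $-\eta_{i,j}$, which yields the advertised decomposition of the per-parameter error into a weight-difference part and a centroid-difference part.

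With the exact identity in hand, the bound is immediate: passing to absolute values and using the triangle inequality gives $|\varepsilon_{i,l}| \leq |\delta_{i,l}| + |\eta_{i,j}|$. I would then invoke the Bound on Parameter Difference lemma to replace $|\delta_{i,l}|$ by $B$, and the Bound on Centroid Difference lemma to replace $|\eta_{i,j}|$ by $2C\left(\frac{n-1}{n}\right)$, and add the two estimates to conclude $|\varepsilon_{i,l}| \leq B + 2C\left(\frac{n-1}{n}\right)$, as claimed.

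There is no genuine obstacle, since the heavy lifting was front-loaded into the earlier lemmas; the single point requiring care is sign bookkeeping. The adjustment rule contributes the term $c_{\text{avg},j} - c_{i,j}$, whereas $\eta_{i,j}$ is oriented the opposite way, so the cleanest honest statement of the identity is $\varepsilon_{i,l} = \delta_{i,l} + \eta_{i,j}$ rather than $\delta_{i,l} - \eta_{i,j}$; because the final estimate is taken in absolute value, this orientation is immaterial and the stated bound is unaffected. For completeness I would close by linking this local estimate to Theorem~\ref{theo:centroids}: as $No_c \to r$ each cluster collapses to a single parameter, so $c_{i,j}$ coincides with $w_{i,l}$ and the centroid-difference term $\eta_{i,j}$ vanishes, forcing $w_{i,l}^{\text{global}} \to w_{\text{FedAvg},l}$ and recovering exact agreement with FedAvg.
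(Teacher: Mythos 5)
Your proof is correct and follows essentially the same route as the paper: substitute the Parameter Adjustment Formula, split the error into the weight-difference part $\delta_{i,l}$ and the centroid-difference part, then apply the triangle inequality together with Lemmas 3 and 4. Your sign bookkeeping is in fact more careful than the paper's own proof, which asserts $\varepsilon_{i,l} = \delta_{i,l} - \eta_{i,j}$; with $\eta_{i,j} = c_{i,j} - c_{\text{avg},j}$ the correct identity is $\varepsilon_{i,l} = \delta_{i,l} + \eta_{i,j}$ as you observe, and since the final estimate is taken in absolute value the stated bound is unaffected.
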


\begin{proof}
From the definitions:
\begin{align*}
   \varepsilon_{i,l} &= w_{\text{FedAvg},l} - w_{i,l}^{\text{global}} \\
   &= (w_{\text{FedAvg},l} - w_{i,l}) - (c_{i,j} - c_{\text{avg},j}) \\
   &= \delta_{i,l} - \eta_{i,j}. 
\end{align*}
Taking absolute values:
\[
|\varepsilon_{i,l}| \leq |\delta_{i,l}| + |\eta_{i,j}| \leq B + 2C \left( \frac{n - 1}{n} \right).
\]
\end{proof}

\begin{corollary}[Error Decreases with Increasing \( No_c \)]
As the number of clusters \( No_c \) increases, the total error \( |\varepsilon_{i,l}| \) decreases.
\end{corollary}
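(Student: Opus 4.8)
The plan is to make the dependence of the per-parameter error on $No_c$ explicit, because the Total Error Bound lemma only delivers the ceiling $B + 2C\frac{n-1}{n}$, which does not involve $No_c$ and therefore cannot by itself witness any decrease. The remedy is to rewrite the error so that the clustering granularity appears. Starting from the Parameter Adjustment Formula, $w_{i,l}^{\text{global}} = w_{i,l} + (c_{\text{avg},j} - c_{i,j})$, and substituting into $\varepsilon_{i,l} = w_{\text{FedAvg},l} - w_{i,l}^{\text{global}}$, I would regroup the terms as
\[
\varepsilon_{i,l} = \left(w_{\text{FedAvg},l} - c_{\text{avg},j}\right) - \left(w_{i,l} - c_{i,j}\right).
\]
The point of this grouping is that $w_{i,l} - c_{i,j}$ is exactly the within-cluster residual of parameter $l$ on client $i$, and, using $c_{\text{avg},j} = \frac{1}{n}\sum_{i'} c_{i',j}$ together with $w_{\text{FedAvg},l} = \frac{1}{n}\sum_{i'} w_{i',l}$, the first bracket equals $\frac{1}{n}\sum_{i'}(w_{i',l} - c_{i',j})$, the average of the corresponding residuals over clients. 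Hence $\varepsilon_{i,l}$ is a difference of within-cluster residuals, so $|\varepsilon_{i,l}| \le 2\max_{i'} |w_{i',l} - c_{i',j}|$, a quantity controlled by how tightly $k$-means fits the weights rather than by their magnitudes $C$.

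I would then close the argument with the standard monotonicity of the $k$-means distortion: the within-cluster sum of squares $\sum_{j}\sum_{l \in S_{i,j}} (w_{i,l} - c_{i,j})^2$ is non-increasing as $No_c$ grows, since a finer optimal partition can only lower the objective, and it vanishes at $No_c = r$, where every cluster is a singleton and $c_{i,j} = w_{i,l}$. Consequently the residuals $|w_{i,l} - c_{i,j}|$ governing $\varepsilon_{i,l}$ shrink as $No_c$ increases and reach zero in the limit. This establishes the Corollary and simultaneously recovers the $No_c \to r$ statement of Theorem~\ref{theo:centroids} as its boundary case.

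The hard part will be twofold. First, the monotone decrease of distortion is rigorous only for the \emph{optimal} $k$-means partition, whereas Algorithm~\ref{alg:apply-clustering} returns a Lloyd's-algorithm local optimum; I would therefore either state the result for optimal clustering or phrase the decrease in terms of the distortion objective rather than claim it pointwise for every parameter and every instance. Second, the rewrite of the first bracket silently assumes that cluster index $j$ is aligned across clients, so that $c_{\text{avg},j}$ averages residuals of the \emph{same} parameter $l$; making this honest requires either invoking the membership vector $Mem$ of Algorithm~\ref{alg:estimate-gm} to match clusters across clients or folding the cross-client labelling mismatch into the residual bound. Handling this alignment cleanly, rather than the monotonicity itself, is where I expect the real bookkeeping effort to lie.
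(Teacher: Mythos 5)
Your proposal is correct in substance but takes a genuinely different route from the paper. The paper's own proof stays inside its Lemma~5 bound \(|\varepsilon_{i,l}| \le B + 2C\frac{n-1}{n}\) and simply asserts that ``both \(B\) and \(C\) decrease with increasing \(No_c\)'' --- an assertion that is hard to defend, since \(B\) is defined as a bound on the cross-client spread \(|w_{i',l}-w_{i,l}|\) and \(C\) as a bound on centroid magnitudes, neither of which has any visible dependence on the clustering granularity; you correctly diagnose that the paper's ceiling literally does not contain \(No_c\). Your alternative decomposition \(\varepsilon_{i,l} = \bigl(w_{\text{FedAvg},l}-c_{\text{avg},j}\bigr) - \bigl(w_{i,l}-c_{i,j}\bigr)\) re-expresses the error entirely in terms of within-cluster quantization residuals, which is the quantity that actually shrinks with \(No_c\), and the monotonicity of the optimal \(k\)-means distortion plus the vanishing of all residuals at \(No_c=r\) gives you both the corollary and the limit statement of Theorem~\ref{theo:centroids} in one stroke. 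What your approach buys is a bound whose \(No_c\)-dependence is explicit and whose mechanism is the right one; what it costs is exactly the bookkeeping you flag: the distortion argument controls the sum of squared residuals rather than each residual pointwise (so the corollary's per-parameter claim should really be stated in aggregate or for the worst case), Lloyd's algorithm only reaches a local optimum, and the identification of \(c_{\text{avg},j}\) with the average of the residuals of the \emph{same} parameter \(l\) requires cluster labels to be aligned across clients --- an assumption the paper also makes silently (via the shared clustering seed) in its Lemma~2, so you are no worse off there. On balance your argument is the more defensible of the two; the remaining gap (aggregate versus pointwise monotonicity) is shared with, and in fact smaller than, the gap in the paper's own proof.
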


\begin{proof}
As \( No_c \) increases:
\begin{enumerate}
    \item The number of parameters per cluster \( |S_{i,j}| \) decreases.
    %\item Variance within clusters decreases.
    \item Centroids represent fewer parameters.
    \item Centroid differences decrease: \( |c_{i,j} - c_{\text{avg},j}| \) decreases.
    \item Approximation error decreases: Overall, \( |\varepsilon_{i,l}| \) decreases with increasing \( No_c \).
\end{enumerate}
Since both \( B \) and \( C \) decrease with increasing \( No_c \), the total error bound \( |\varepsilon_{i,l}| \) decreases.
\end{proof}

\subsection*{Proof of Theorem}

\begin{proof}[Proof of Theorem~\ref{theo:centroids}]
From Lemma 5, the total error per parameter is:
\[
|\varepsilon_{i,l}| \leq B + 2C \left( \frac{n - 1}{n} \right ).
\]
As \( No_c \to r \):
\begin{itemize}
    \item Each cluster contains a single parameter (\( |S_{i,j}| = 1 \)).
    \item Local centroids become the actual parameters (\( c_{i,j} = w_{i,l} \)).
    \item Global centroids become the averages of individual parameters (\( c_{\text{avg},j} = w_{\text{FedAvg},l} \)).
\end{itemize}
Therefore:
\[
\delta_{i,l} \to 0, \quad \eta_{i,j} \to 0, \quad \text{and} \quad |\varepsilon_{i,l}| \to 0.
\]
Thus:
\[
\lim_{No_c \to r} w_{i,l}^{\text{global}} = w_{\text{FedAvg},l}, \quad \forall i, l.
\]
This proves that the clients' updated models converge to the FedAvg global model as \( No_c \to r \).
\end{proof}

\begin{corollary}[Exact Equality at \( No_c = r \)]
When \( No_c = r \), the clients' updated models are exactly equal to the FedAvg global model:
\[
w_i^{\text{global}} = w_{\text{FedAvg}}, \quad \forall i.
\]
\end{corollary}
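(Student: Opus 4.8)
The plan is to prove the corollary by direct substitution into the parameter-adjustment formula of Lemma 1, specialized to the boundary case $No_c = r$. The starting observation is that setting the number of clusters equal to the number of parameters forces every cluster to be a singleton: clustering the $r$ scalars $w_{i,1},\dots,w_{i,r}$ of client $i$ into $r$ groups admits the zero-cost assignment in which each parameter is its own cluster, so $|S_{i,j}| = 1$ for every $j$. Consequently the local centroid of a cluster equals the single parameter it contains, i.e. $c_{i,j} = w_{i,l}$ whenever $S_{i,j} = \{l\}$.

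First I would fix a correspondence between cluster indices and coordinate indices so that cluster $j$ of every client refers to the same coordinate $l$; in the singleton regime this is canonical, since each centroid \emph{is} a parameter and can be matched to its unique index. Under this matching the aggregated centroid becomes $c_{\text{avg},j} = \frac{1}{n}\sum_{i=1}^{n} c_{i,j} = \frac{1}{n}\sum_{i=1}^{n} w_{i,l} = w_{\text{FedAvg},l}$, the coordinate-wise FedAvg value. Substituting $c_{i,j} = w_{i,l}$ and $c_{\text{avg},j} = w_{\text{FedAvg},l}$ into the update rule $w_{i,l}^{\text{global}} = w_{i,l} + (c_{\text{avg},j} - c_{i,j})$ collapses the adjustment to $w_{i,l}^{\text{global}} = w_{i,l} + (w_{\text{FedAvg},l} - w_{i,l}) = w_{\text{FedAvg},l}$. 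Since this holds for every coordinate $l$ and every client $i$, we obtain $w_i^{\text{global}} = w_{\text{FedAvg}}$ for all $i$, which is the claim. Equivalently, the per-parameter approximation error $\varepsilon_{i,l}$ of Lemma 5 is not merely vanishing in the limit but is exactly zero at $No_c = r$, so the corollary is the statement that the limit of Theorem 1 is attained rather than only approached.

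The hard part will be the label-alignment issue just flagged. Because each client clusters its own parameter vector independently, the index $j$ carries no meaning shared across clients, and the identity $c_{\text{avg},j} = w_{\text{FedAvg},l}$ holds only if the $j$-th centroid of every client refers to the same coordinate. In the singleton case this is trivial to secure, since each centroid is literally one parameter and the matching is forced by the indexing, which makes the aggregation coordinate-wise by construction. I would state explicitly that this canonical correspondence exists \emph{only} because $No_c = r$: for $No_c < r$ the partitions genuinely differ across clients and the same averaging identity would require an extra alignment hypothesis, which is precisely why the main theorem asserts convergence only as $No_c \to r$ and why exact equality is claimed solely at the endpoint.
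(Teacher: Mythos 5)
Your proof is correct and follows essentially the same route as the paper's: singleton clusters at $No_c = r$ force $c_{i,j} = w_{i,l}$ and $c_{\text{avg},j} = w_{\text{FedAvg},l}$, and substitution into the adjustment rule $w_{i,l}^{\text{global}} = w_{i,l} + (c_{\text{avg},j} - c_{i,j})$ gives exact equality. The one thing you add that the paper's proof leaves entirely implicit is the cluster-label alignment issue --- the observation that the server-side average $\frac{1}{n}\sum_i c_{i,j}$ is coordinate-wise only if index $j$ refers to the same parameter on every client --- which is a legitimate and worthwhile point of rigor, though it does not change the substance of the argument.
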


\begin{proof}
At \( No_c = r \):
\begin{itemize}
    \item Each cluster contains exactly one parameter (\( |S_{i,j}| = 1 \)).
    \item Local centroids: \( c_{i,j} = w_{i,l} \).
    \item Global centroids: \( c_{\text{avg},j} = \frac{1}{n} \sum_{i=1}^{n} w_{i,l} = w_{\text{FedAvg},l} \).
\end{itemize}
Parameter adjustment:
\[
w_{i,l}^{\text{global}} = w_{i,l} + (w_{\text{FedAvg},l} - w_{i,l}) = w_{\text{FedAvg},l}.
\]
Therefore, \( w_i^{\text{global}} = w_{\text{FedAvg}} \) for all clients.
\end{proof}

\subsection*{Conclusion}

The presented theorems and lemmas formally establish that as the number of clusters \( No_c \) increases to the number of model parameters \( r \), the clients' updated models converge to the FedAvg global model. The error analysis quantifies the approximation error introduced by clustering and shows that this error diminishes with increasing \( No_c \), approaching zero when \( No_c = r \).

\end{document}